
\documentclass{article}

\usepackage{microtype}
\usepackage{graphicx}
\usepackage[normalem]{ulem}
\usepackage{booktabs} 
\usepackage[many]{tcolorbox}
\usetikzlibrary{shadows}

\newtcolorbox{shadedbox}{
  drop shadow southeast,
  breakable,
  enhanced jigsaw,
  colback=white,
}
\usepackage{hyperref}
\DeclareMathOperator{\tr}{tr}


\usepackage[accepted]{icml2020}



\icmltitlerunning{Multi-Agent Determinantal Q-Learning}

\usepackage{subcaption}
\usepackage{amsmath,amsthm,amscd,amsbsy,amssymb,latexsym,url,bm}


\usepackage{cuted}
\usepackage{flushend}

\usepackage{amsfonts}       
\usepackage{nicefrac}       
\usepackage{microtype}      

\usepackage{helvet}
\usepackage{courier}
\usepackage{multirow}
\usepackage{graphicx}
\usepackage{enumitem}
\usepackage{ragged2e}
\usepackage{epsfig}
\usepackage{dsfont}

\theoremstyle{plain}
\newtheorem{proposition}{Proposition}
\newtheorem{theorem}{Theorem}
\newtheorem{lemma}{Lemma}

\newtheorem{assumption}{Assumption}
\newtheorem{definition}{Definition}


\expandafter\def\expandafter\normalsize\expandafter{%
    \normalsize
    \setlength\abovedisplayskip{6pt}
    \setlength\belowdisplayskip{6pt}
    \setlength\abovedisplayshortskip{6pt}
    \setlength\belowdisplayshortskip{6pt}
}
\setlength{\textfloatsep}{2pt}

\DeclareMathOperator{\diag}{diag}


\DeclareRobustCommand{\frac}[3][0pt]{{\begingroup\hspace{#1}#2\hspace{#1}\endgroup\over\hspace{#1}#3\hspace{#1}}}


\newcommand{\bs}{\boldsymbol}

\begin{document}

\twocolumn[
\icmltitle{Multi-Agent Determinantal Q-Learning}



\icmlsetsymbol{equal}{*}

\begin{icmlauthorlist}
\icmlauthor{Yaodong Yang}{equal,huawei,ucl}
\icmlauthor{Ying Wen}{equal,huawei,ucl}
\icmlauthor{Liheng Chen}{sj}
\icmlauthor{Jun Wang}{ucl}
\icmlauthor{Kun Shao}{huawei}
\icmlauthor{David Mguni}{huawei}
\icmlauthor{Weinan Zhang}{sj}

\end{icmlauthorlist}

\icmlaffiliation{ucl}{University College London.}
\icmlaffiliation{huawei}{Huawei Technology R\&D UK.}
\icmlaffiliation{sj}{Shanghai Jiaotong University}
\icmlaffiliation{huawei}{Huawei Technology R\&D UK.}

\icmlcorrespondingauthor{Yaodong Yang}{yaodong.yang@huawei.com}

\icmlkeywords{Machine Learning, ICML}

\vskip 0.3in
]

\printAffiliationsAndNotice{\icmlEqualContribution} 

\begin{abstract}

Centralized training with decentralized execution has become an important paradigm in multi-agent learning. Though practical, current methods rely on restrictive assumptions to decompose the centralized  value function across agents for execution. In this paper, we eliminate this restriction by proposing multi-agent determinantal Q-learning. Our method is established on Q-DPP, an extension of determinantal point process (DPP) with  partition-matroid constraint to multi-agent setting. Q-DPP promotes agents to acquire diverse behavioral models; this allows  a natural factorization of the joint Q-functions with no need for \emph{a priori} structural constraints on the value function or special network architectures. We demonstrate that Q-DPP generalizes major solutions including VDN, QMIX, and QTRAN on decentralizable cooperative tasks. To efficiently draw samples  from Q-DPP, we adopt an existing linear-time sampler with theoretical approximation guarantee. The sampler also benefits exploration by coordinating agents to cover orthogonal directions in the state space during multi-agent training. We evaluate our algorithm on various cooperative benchmarks; its effectiveness has been demonstrated when compared with the state-of-the-art. 

\end{abstract}

\section{Introduction}
Multi-agent reinforcement learning (MARL) methods hold great potential to solve a variety of real-world problems, such as mastering multi-player video games \cite{peng2017multiagent}, dispatching taxi orders  \cite{li2019efficient}, and studying population dynamics \cite{yang2018study}.   
In this work, we consider  the multi-agent cooperative  setting \cite{panait2005cooperative} where a team of agents collaborate to achieve one common goal in a partially observed environment.

A full spectrum of MARL algorithms  has been developed to solve cooperative tasks \cite{panait2005cooperative}; the two endpoints of the spectrum are independent  and centralized  learning (see Fig.~\ref{fig:algo_types}). 
Independent learning (IL) \cite{tan1993multi} merely treats   other agents' influence to the system  as part of the environment. 
The learning agent not only faces a non-stationary environment, but also suffers from    \emph{spurious} rewards  \cite{sunehag2017value}.  
Centralized learning (CL), in the other extreme, treats a multi-agent problem as a single-agent problem  despite the fact that many real-world applications require local autonomy. 
Importantly, the CL approaches exhibit  combinatorial complexity and can hardly scale to more than tens of agents \cite{yang2019alpha}.

Another paradigm typically considered is a hybrid of centralized training and decentralized execution (CTDE)  \cite{oliehoek2008optimal}. 
For value-based approaches in the framework of CTDE, 
a fundamental challenge is how to correctly decompose the centralized value function among agents for decentralized execution.  
For a cooperative task to be deemed decentralizable, it is required that local maxima on the value function per every agent should amount to the global maximum on the joint value function.  
In enforcing such a condition, current state-of-the-art methods rely on restrictive  structural constraints and/or network architectures.
For instance, 
Value Decomposition Network (VDN) \cite{sunehag2017value} and Factorized-Q \cite{zhou2019factorized} propose to directly  factorize the joint value function into a summation of individual value functions.
QMIX \cite{rashid2018qmix}  augments the summation to be non-linear aggregations, while maintaining a monotonic  relationship between centralized and individual value functions. 
QTRAN \cite{son2019qtran} introduces a refined learning objective on top of QMIX along with specific network designs. 

Unsurprisingly, the structural constraints put forward by VDN / QMIX / QTRAN   inhibit the representational power of the centralized value function   \cite{son2019qtran}; as a result,  
the class of decentralizable cooperative tasks these methods can tackle is limited.
For example, poor empirical results of QTRAN have been reported on multiple multi-agent cooperative benchmarks  \cite{mahajan2019maven}. 

\begin{figure}[t!]
    \centering
            \vspace{-8pt}
    \includegraphics[width=0.42\textwidth]{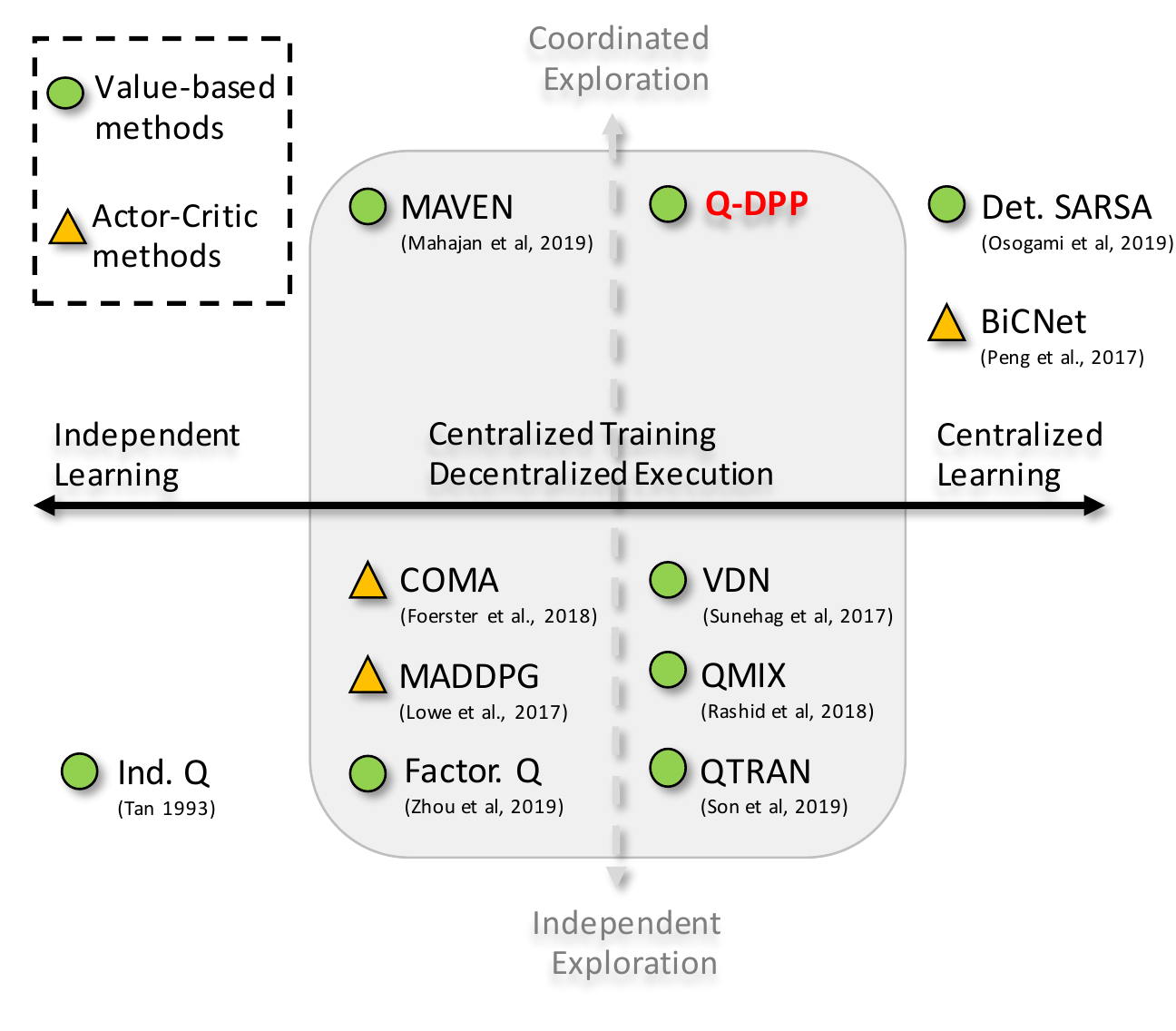}
        \vspace{-15pt}
    \caption{Spectrum of MARL methods on cooperative tasks.
    }
    \label{fig:algo_types}
    \vspace{-0pt}
\end{figure}

Apart from the aforementioned problems, structural constraints also hinder efficient explorations when applied to value function decomposition.
In fact, since agents are treated independently during the execution stage, CTDE methods inevitably lack a principled exploration strategy  \cite{matignon2007hysteretic}.  
Clearly, an increasing per-agent exploration rate of $\epsilon$-greedy in the single-agent setting  can help exploration; however, it has been proved \cite{mahajan2019maven} that due to the structural constraints  (e.g. the monotonicity assumption in QMIX), in the multi-agent setting, increasing $\epsilon$ will only lower the probability of obtaining the optimal value function. 
As a treatment, MAVEN \cite{mahajan2019maven} introduces a hierarchical  model to coordinate diverse explorations among agents.  
Yet, a principled exploration strategy with minor structural constraints on the value function is still missing for value-based CTDE methods. 

To eliminate restrictive constraints on the  value function decomposition, one reasonable solution  is to make agents acquire a \textbf{diverse} set of behavioral models during training so that  the optimal action of one agent  does not depend on the actions of the other agents. 
In such  scenario,  the equivalence between the local maxima on the per-agent value function  and the global maximum on the joint value function can be automatically achieved.
As a result,  the centralized value function can enjoy a natural  factorization with no need for any structural constraints beforehand.

In this paper, we present a new value-based solution in the CTDE paradigm to     multi-agent cooperative tasks. 
We establish Q-DPP, an extension of  determinantal point process (DPP) \cite{macchi1977fermion} with partition constraint, and apply it to  multi-agent learning. 
DPPs  are elegant probabilistic models on sets that can capture both quality and diversity when a subset is sampled from a ground set; this makes them ideal for modeling  the set that contains  different agents' observation-action pairs in the multi-agent learning context.  
 We adopt Q-DPP as a  function  approximator for the centralized value function. 
 An attractive  property of using Q-DPP is that, when reaching the optimum,  it can offer a natural factorization on the centralized value function, assuming agents have acquired a diverse set of behaviors.
 Our method eliminates the need for \emph{a priori}  structural constraints or bespoke neural architectures.
In fact, we demonstrate that Q-DPP generalizes current solvers  including VDN, QMIX, and QTRAN, where all these methods can be derived as special cases from Q-DPP.
As an additional contribution, we  adopt a tractable sampler, based on the idea of sample-by-projection in $P$-DPP   \cite{celis2018fair}, for Q-DPP  with theoretical approximation guarantee. 
 Our sampler makes agents explore in a sequential manner; agents who act later are coordinated to visit only the orthogonal areas in the state space that previous agents haven't explored. 
Such coordinated way of explorations effectively boosts the sampling efficiency in the CTDE setting. 
Building upon these  advantages, finally, we demonstrate that our proposed Q-DPP algorithm is superior to the existing  state-of-the-art solutions on a variety of multi-agent cooperation benchmarks.

\section{Preliminaries: Determinantal Point Process}

DPP is a probabilistic framework that characterizes how likely a subset  is going to be sampled from a ground set. 
Originated from quantum physics for modeling repulsive Fermion particles \cite{macchi1977fermion}, 
DPP has recently been introduced  to the machine learning community due to its probabilistic nature \cite{kulesza2012determinantal}. 
\begin{definition}[DPP]
\label{def:dpp}
For a ground set of items $\mathcal{Y}=\{1,2,\dots, M\}$, a DPP, denoted by $\mathbb{P}$, is  a probability measure on  the set of all subsets of $\mathcal{Y}$, i.e., $2^{\mathcal{Y}}$. 
Given  an  $M \times M$ positive semi-definite (PSD) kernel $\bm{\mathcal{L}}$ that measures similarity for any pairs of items in $\mathcal{Y}$, 
let $\bm{Y}$ be a random subset drawn according to $\mathbb{P}$, then we  have,  $\forall Y \subseteq  \mathcal{Y}, $
\begin{equation}
\mathbb{P}_{\bm{\mathcal{L}}}\big(\bm{Y} = Y\big) \propto \det	\big(\bm{\mathcal{L}}_Y\big) = \operatorname{Vol}^2\big(\{\bm{w}_i\}_{i \in Y}\big),
\label{def:dpp}
\end{equation}
where $\bm{\mathcal{L}}_Y :=  [\bm{\mathcal{L}}_{i,j}]_{i,j \in Y}$  denotes the submatrix of $\bm{\mathcal{L}}$  whose entries are indexed by the items included in $Y$. 
If we write $\bm{\mathcal{L}}:=\bm{\mathcal{W}}\bm{\mathcal{W}}^{\top}$ with $\bm{\mathcal{W}} \in \mathbb{R}^{M \times P}, P \le M$, and rows of $\bm{\mathcal{W}}$ being $\{\bm{w}_i\}$, then the determinant value is essentially the squared $|Y|$-dimensional volume of parallelepiped spanned by the rows of $\bm{\mathcal{W}}$ corresponding to elements in $Y$. 
\end{definition}
A PSD matrix ensures  all principal minors of $\bm{\mathcal{L}}$ are non-negative $\det	(\bm{\mathcal{L}}_Y) \ge 0 $; it thus suffices to be a proper probability distribution. 
The normalizer can be computed as:  $\sum_{Y \subseteq  \mathcal{Y}}\det	(\bm{\mathcal{L}}_Y) = \det	(\bm{\mathcal{L}} + \bm{I})$, where $\bm{I}$ is an  $M\times M$ identity matrix. 
Intuitively, one can think of a diagonal entry $\bm{\mathcal{L}}_{i,i}$ as capturing the quality of item $i$, while an off-diagonal entry $\bm{\mathcal{L}}_{i,j}$ measures the similarity between items $i$ and $j$.
DPP models the \textbf{repulsive} connections among \textbf{multiple} items in a sampled subset. In the example of two items, $\mathbb{P}_{\bm{\mathcal{L}}}(\{i, j\}) \propto \left|\begin{array}{ll}{\bm{\mathcal{L}}_{i,i}} & {\bm{\mathcal{L}}_{i,j}} \\ {\bm{\mathcal{L}}_{j,i}} & {\bm{\mathcal{L}}_{j,j}}\end{array}\right|=  \bm{\mathcal{L}}_{i,i}\bm{\mathcal{L}}_{j,j}-\bm{\mathcal{L}}_{i,j}\bm{\mathcal{L}}_{j,i}$, which suggests, if item $i$ and item $j$ are perfectly  similar, such that  $\bm{\mathcal{L}}_{i,j}=\sqrt{\bm{\mathcal{L}}_{i,i}\bm{\mathcal{L}}_{j,j}}$, then we know these two items will almost surely not co-occur, hence such two-item subset of $\{i, j\}$ from the ground set will never be  sampled.

DPPs are attractive in that they only require training the kernel matrix $\bm{\mathcal{L}}$, which can be learned via maximum likelihood  \cite{affandi2014learning}.
A trainable DPP  favors many supervised learning tasks where diversified outcomes are desired,  such as  image generation \cite{elfeki2019gdpp}, video summarization \cite{sharghi2018improving},  model ensemble \cite{pang2019improving}, and recommender system \cite{chen2018fast}. 
It is, however,  non-trivial to adapt DPPs to a multi-agent setting since  additional restrictions are required to put on the ground set so that valid samples can be drawn for the purpose of multi-agent training. 
This leads to  our Q-DPPs. 

\section{Multi-Agent Determinantal Q-Learning}

We offer a new value-based solution to multi-agent cooperative tasks.  
In particular, we introduce Q-DPPs as general function approximators  for the centralized value functions, similar to neural networks in deep Q-learning   \cite{mnih2015human}. 
We start from the problem formulation. 

\subsection{Problem Formulation of Multi-Agent Cooperation}

Multi-agent cooperation in a partially-observed environment is usually modeled as a Dec-POMDP \cite{oliehoek2016concise} denoted by a tuple $\mathcal{G}=<\mathcal{S}, \mathcal{N}, \mathcal{A},  \mathcal{O}, \mathcal{Z}, \mathcal{P}, \mathcal{R}, \gamma>$.
Within $\mathcal{G}$, $s \in \mathcal{S}$ denotes the global environmental state.
At every time-step  $t \in \mathbb{Z}^+$, each agent $i \in  \mathcal{N} =\{1, \ldots, N\}$ selects an action $a_i \in \mathcal{A}$ where a joint action stands for $\bm{a}:= (a_i)_{i\in\mathcal{N}} \in \mathcal{A}^N$. 
Since the environment is partially observed, each agent only has access  to its local observation  $o \in \mathcal{O}$ that is acquired through an observation function  $\mathcal{Z}(s, a): \mathcal{S} \times \mathcal{A} \rightarrow \mathcal{O}$.
The state transition dynamics are  determined by   $\mathcal{P}(s' | s, \bm{a}):= \mathcal{S} \times \mathcal{A}^N \times \mathcal{S} \rightarrow [0, 1]$.
Agents optimize towards one shared goal whose  performance is measured by $\mathcal{R}(s, \bm{a}): \mathcal{S} \times \mathcal{A}^N \rightarrow \mathbb{R}$,  and $\gamma \in [0,1)$ discounts the future rewards.
Each agent recalls an observation-action history $\tau_i \in \mathcal{T} := (\mathcal{O} \times \mathcal{A})^{t} $, and executes a stochastic policy $\pi_i(a_i | \tau_i): \mathcal{T} \times \mathcal{A} \rightarrow [0,1]$ which is conditioned on $\tau_i$. 
All of the agents histories is defined as   $\boldsymbol{\tau}:=(\tau_i)_{i\in \mathcal{N}}\in\mathcal{T}^N$.
Given a joint policy $\bm{\pi}:=(\pi_i)_{i \in \mathcal{N}}$, 
the joint action-value function at time $t$ stands as $Q^{\bm{\pi}}(\bm{\tau}^t, \bm{a}^t) = \mathbb{E}_{s^{t+1:\infty}, \bm{a}^{t+1:\infty}} [G^t | \bm{\tau}^t, \bm{a}^t]$, where $G^t = \sum_{i=0}^{\infty}\gamma^i\mathcal{R}^{t+i}$ is the total accumulative rewards. 

The \textbf{goal}  is to find an optimal value function  $Q^* = \max_{\bm{\pi}} Q^{\bm{\pi}}(\bm{\tau}^t, \bm{a}^t)$ and the corresponding policy $\bm{\pi}^*$.
A  direct centralized approach is to learn the joint value function, parameterized by $\theta$, by 
minimizing the squared temporal-difference error $\mathcal{L}(\theta)$  \cite{watkins1992q}  from a sampled mini-batch of transition data $\{\langle \bm{\tau}, \bm{a}, \mathcal{R},  \bm{\tau}' \rangle\}_{j=1}^{E}$, i.e., 
\vspace{-15pt}
\begin{equation}
\label{eq:q_loss}
\small{
\mathcal{L}(\theta)= \sum_{j=1}^{E}\Big\| \mathcal{R} + \gamma\max_{\bm{a}'} Q(\bm{\tau}', \bm{a}'; \theta^{-}) - Q^{\bm{\pi}}(\bm{\tau}, \bs{a} ; \theta) \Big\|^2,    \hspace{-5pt}
}
\end{equation}
where $\theta^{-}$ denotes the target parameters that can be  periodically copied from $\theta$ during training.

In our work, apart from  the joint value function, we also focus on obtaining a decentralized policy for each agent. 
CTDE is a paradigm for solving Dec-POMDP \cite{oliehoek2008optimal} where it allows the  algorithm  access to all of the agents local histories $\bm{\tau}$  during training. During testing, however, the algorithm uses each of the agent's own history $\tau_i$ for execution. 
CTDE methods provide valid solutions  to  multi-agent  cooperative tasks that are  \emph{decentralizable}, which is formally defined as below.  
\begin{definition}[Decentralizable Cooperative Tasks, a.k.a. IGM Condition ~\cite{son2019qtran}] 
A cooperative task is decentralizable if $\exists \{Q_i\}_{i=1}^{N}$ such that  $\forall \bm{\tau} \in \mathcal{\tau}^N, \bm{a}\in\mathcal{A}^N$,  
\label{def:igm}
\small{
\begin{equation}
\label{eq:igm}
{
\hspace{-5pt} \arg \max _{\bs{a}} Q^{\bm{\pi}}(\bm{\tau}, \bs{a})=\left[\begin{array}{c}{\arg \max _{a_{1}} Q_{1}(\tau_1, a_{1})} \\ {\vdots} \\ {\arg \max _{a_{N}} Q_{N}(\tau_N, a_{N})}\end{array} \right].\hspace{-5pt}}
\end{equation}}
\end{definition}
 Eq. \ref{eq:igm}   suggests that 
local maxima on the extracted value function per every agent needs  to amount to the global maximum on the joint value function. 
A key challenge for CTDE methods is, then, how to correctly extract each of the agent's individual Q-function $\{Q_i\}_{i=1}^{N}$, and as such an executable policy,   from  a centralized Q-function  $Q^{\bm{\pi}}$. 

To  satisfy Eq. \ref{eq:igm}, current  solutions rely on restrictive assumptions that enforce  structural constraints on the factorization of the joint Q-function. For example, 
VDN \cite{sunehag2017value} adopts the additivity assumption by assuming   $Q^{\bm{\pi}}(\bm{\tau}, \bm{a}):=\sum_{i=1}^{N}Q_i(\tau_i, a_i)$.   QMIX \cite{rashid2018qmix} applies the monotonicity assumption to ensure   $\frac{\partial Q^{\bm{\pi}}(\bm{\tau}, \bm{a})}{\partial Q_i(\tau_i, a_i)} \ge 0, \forall i \in \mathcal{N}$.
QTRAN \cite{son2019qtran}  introduces a refined factorizable learning   objective in addition to QMIX. 
Nonetheless, structural constraints harm the representational power of the centralized value function, and also hinder efficient explorations \cite{son2019qtran}.   
To mitigate these problems, we propose Q-DPP as an alternative that naturally factorizes the joint Q-function by learning a diverse set of behavioral models among agents.

\begin{figure*}[t!]
    \centering
                    \vspace{-8pt}
    \includegraphics[width=0.97\textwidth]{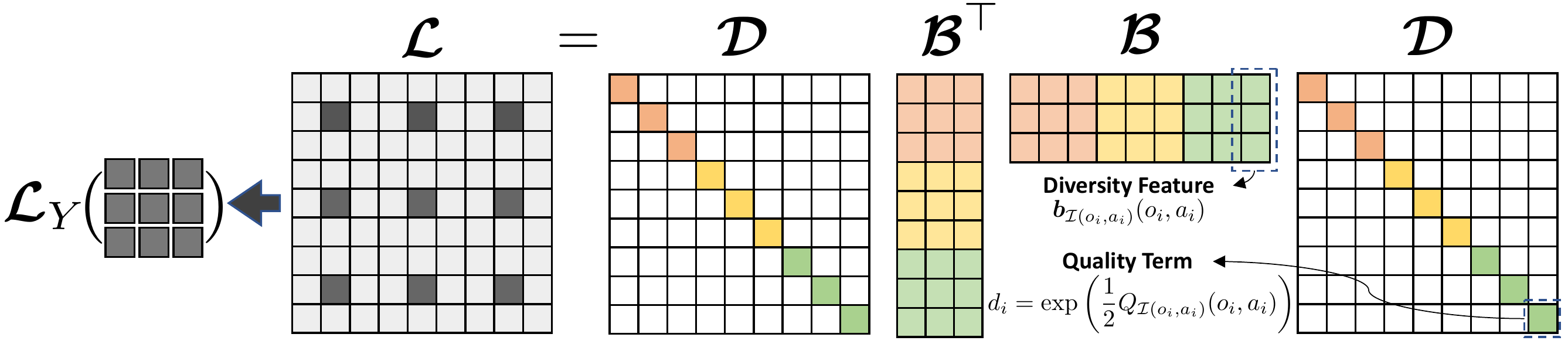}
            \vspace{-2pt}
    \caption{Example of Q-DPP with  quality-diversity kernel decomposition in a single-state three-player learning task, each agent has three actions to choose. 
    The size of the ground set is $|\mathcal{Y}|=9$, and the size of valid subsets $|\mathcal{C}(\bm{o})|$ is  $3^3=27$. 
    Different colors represent different partitions of each agent's observation-action pairs. 
    Suppose all three agents select the $2$nd action, then the Q-value of the joint action according to Eq.~\ref{eq:q-dpp} is    $Q^{\bm{\pi}}\big(\bm{o}, \bs{a}\big) = 
    \det\big([\bm{\mathcal{L}}_{[i,j], i,j \in\{2,5,8\} }]\big).$ 
    }
    \label{fig:dpp}
    \vspace{-10pt}
\end{figure*}

\subsection{Q-DPP: A Constrained DPP for MARL} 
Our method is established on Q-DPP which is  an extension of DPP that suits MARL.  
We assume that local observation $o_i$ encodes all  history information  $\tau_i$ at each  time-step.
We model the ground set of all agents' observation-action pairs by a DPP, i.e.,   $\mathcal{Y}=\big\{(o_1^1, a_1^1), \ldots,(o_N^{|\mathcal{O}|}, a_N^{|\mathcal{A}|}) \big\}$ with the size of the ground set being  $|\mathcal{Y}|=N|\mathcal{O}||\mathcal{A}|$.

In the context of multi-agent learning,  each agent takes one valid action depending on its local observation. 
A valid sample from DPP, therefore, is expected to include one valid observation-action pair  for each agent, 
and the observations from the sampled pairs must match the true observations  that agents receive at every time step.  
To meet such requirements, we  propose a new   type of DPP, named  \textbf{Q-DPP}. 
\begin{definition}[Q-DPP]
\label{def:qdpp}
Given a ground set $\mathcal{Y}$ of size $M$ that includes $N$ agents' all possible observation-action pairs $\mathcal{Y}=\big\{(o_1^1, a_1^1), \ldots, (o_N^{|{\mathcal{O}|}}, a_N^{|\mathcal{A}|})\big\}$, we  partition $\mathcal{Y}$ into $N$ disjoint parts, i.e.,   $\mathcal{Y} = \bigcup_{i=1}^{N} \mathcal{Y}_i$ and $\sum_{i=1}^N |\mathcal{Y}_i|= M = N |\mathcal{O}||\mathcal{A}|$, where each partition represents each individual agent's all possible observation-action pairs. 
At every time-step, given agents' observations,     $\bm{o}=(o_i)_{i \in \mathcal{N}}$, 
we define $\mathcal{C}(\bm{o}) \subseteq \mathcal{Y}$ to be a set of valid subsets  including only  observation-action pairs that  agents are allowed to take,    
\[\mathcal{C}(\bm{o}):=\big\{Y \subseteq  \mathcal{Y}: |Y \cap \mathcal{Y}_i(o_i) |=1, \forall i \in \{1,\ldots,N\} \big\}, \] 
with $|\mathcal{C}(\bm{o})|=|\mathcal{A}|^N$, and 
$\mathcal{Y}_i(o_i)$ of size $|\mathcal{A}|$ denotes  the set of  pairs in partition $\mathcal{Y}_i$ with only $o_i$ as the observation,  $$\mathcal{Y}_i(o_i)=\big\{(o_i, a_i^1),\ldots,(o_i, a_i^{|\mathcal{A}|})\big\}. $$
Q-DPP, denoted by $\tilde{\mathbb{P}}$, defines a  probability measure over the valid subsets $Y \in \mathcal{C}(\bm{o}) \subseteq \mathcal{Y}$. 
Let $\bm{Y}$ be a random subset drawn according to $\tilde{\mathbb{P}}$, its probability distribution is defined: 
\begin{equation}
	\label{def:q-dpp}
	\tilde{\mathbb{P}}_{\bm{\mathcal{L}}}\big(\boldsymbol{Y}=Y |   \boldsymbol{Y} \in \mathcal{C}(\bm{o})\big) := \dfrac{\operatorname{det}\big(\bm{\mathcal{L}}_{Y}\big)}{\sum_{_{Y' \in \mathcal{C}(\bm{o})}}\operatorname{det}\big(\bm{\mathcal{L}}_{Y'}\big)}\ . 
	\end{equation}
In addition, given a valid sample $Y \in \mathcal{C}(\bm{o})$,  
 we define an identifying  function $\mathcal{I}: Y \rightarrow \mathcal{N}$ that specifies the  agent number for each valid pair in $Y$, and an index function $\mathcal{J}: \mathcal{Y} \rightarrow \{1,\ldots,M\}$ that specifies the cardinality of each item in $Y$  in the ground set $\mathcal{Y}$. 
 \vspace{-5pt}
\end{definition}

The construction of Q-DPP is inspired by 
$P$-DPP \cite{celis2018fair}.
However,  the partitioned sets in $P$-DPP stay fixed, while in Q-DPP, $\mathcal{C}(\bs{o_t})$ changes at every time-step with the new observation, and the kernel $\bm{\mathcal{L}}$ is learned through the process of  reinforcement learning rather than being given.  
More differences are listed  in \emph{Appendix \ref{sec:kdpp_qdpp}}. 

Given Q-DPPs, we can represent the centralized value function by adopting Q-DPPs as general function approximators:    
\begin{equation}
Q^{\bm{\pi}}\big(\bm{o}, \bs{a}\big) :=  \log \det \Big( \bm{\mathcal{L}}_{Y=\big\{(o_1, a_1), \ldots, (o_N, a_N)  \big\}  \in \mathcal{C}(\bm{o}^t)}\Big), \hspace{-5pt}
\label{eq:q-dpp}
\end{equation}
where $\bm{\mathcal{L}}_Y$ denotes the sub-matrix of $\bm{\mathcal{L}}$  whose entries are indexed by the pairs  included in $Y$. 
Q-DPP embeds the connection between the joint action and each agent's individual actions  into a subset-sampling process, and the Q-value is quantified by the determinant of a kernel matrix whose elements are indexed by the associated  observation-action pairs. 
The goal of multi-agent learning is to learn an optimal joint  Q-function. 
Eq.~\ref{eq:q-dpp} states $\det (\bm{\mathcal{L}}_Y) = \exp(Q^{\pi}(\bm{o}, \bm{a}))$,  meaning Q-DPP actually assigns large probability to the subsets that have large Q-values. 
Given $\det	(\bm{\mathcal{L}}_Y) $ is always positive,  the $\log$ operator ensures Q-DPPs, as general function approximators, can recover any real  Q-functions.

DPPs can capture both the quality and diversity of a sampled subset; 
the joint Q-function represented by Q-DPP in theory should not only acknowledge the quality of each agent's individual action towards a large reward, but the diversification of agents' actions as well. 
 The remaining question is, then,  how to obtain such quality-diversity representation. 

\subsection{Representation of Q-DPP Kernels} 
For any PSD matrix $\bm{\mathcal{L}}$, such a $\bm{\mathcal{W}}$ can always be found so that  $\bm{\mathcal{L}}=\bm{\mathcal{W}}  \bm{\mathcal{W}}^{\top}$ where $\bm{\mathcal{W}} \in \mathbb{R}^{M \times P}, P \le M$. 
Since the diagonal and off-diagonal entries of $\bm{\mathcal{L}}$ represent \emph{quality} and \emph{diversity} respectively, we adopt an  interpretable  decomposition  by expressing each row  of  $\bm{\mathcal{W}}$ as   a  product of a \textbf{quality} term $d_{i} \in \mathbb{R}^{+}$ and a  \textbf{diversity} feature term $\bs{b}_{i} \in \mathbb{R}^{P\times 1}$ with $\left\|\bs{b}_{i}\right\|\le1$, i.e., $\bs{w}_{i} = d_{i}\bs{b}_{i}^{\top}$. 
An example of such decomposition is visualized in Fig.~\ref{fig:dpp} where we define $\bm{\mathcal{B}}=[\bm{b}_1,\ldots,\bm{b}_M]$ and $\bm{\mathcal{D}} = \diag(d_1,\ldots,d_M).$ 
Note that   both $\bm{\mathcal{D}}$ and $\bm{\mathcal{B}}$ are free parameters that can be learned from the environment during the Q-learning process in Eq.~\ref{eq:q_loss}.

If we denote the quality term as each agent's individual Q-value for a given observation-action pair, i.e.,  $\forall (o_i, a_i)  \in \mathcal{Y}, i=\{1,\ldots,M\}, d_i := \exp\big(\frac{1}{2} Q_{\mathcal{I}(o_i, a_i)}(o_i, a_i)\big)$, then Eq.~\ref{eq:q-dpp} can be further written  into 
\begin{equation}
\small{
\begin{aligned}[b]
     Q^{\bm{\pi}}\big(\bm{o}, \bs{a}\big) 
    = & \log \det \Big(\bm{\mathcal{W}}_Y \bm{\mathcal{W}}_Y^{\top} \Big) \\
    = &  \log  \Big(\tr \big(\bm{\mathcal{D}}_Y^{\top}\bm{\mathcal{D}}_Y  \big) \operatorname{det}\big(\bm{\mathcal{B}}_{Y}^{\top}\bm{\mathcal{B}}_Y\big)\Big)  \\[-2pt]
   = & \sum_{i=1}^{N}Q_{\mathcal{I}(o_i, a_i)} \big(o_i, a_i\big) + \log  \det \big(\bm{\mathcal{B}}_{Y}^{\top}\bm{\mathcal{B}}_Y\big). 
\label{eq:qd_decomp}
\end{aligned}
}
\vspace{-5pt}
\end{equation}
 Since a determinant value only reaches the maximum when the associated  vectors in $\bm{\mathcal{B}}_Y$ are mutually orthogonal \cite{noble1988applied},   
Eq.~\ref{eq:qd_decomp} essentially stipulates that Q-DPP represents the joint value function by taking into account not only the quality of each agent's contribution towards reward maximization, more importantly, from a holistic perspective, the  orthogonalization of agents' actions.


In fact, the inclusion of diversifying agents'  behaviors is an important factor in satisfying the condition in   Eq.~\ref{eq:igm}. 
Intuitively, in a decentralizable task with a shared goal, promoting orthogonality between agent's actions can help  clarify the functionality and  responsibility of each agent, which in return leads to a better instantiation  of Eq.~\ref{eq:igm}.
On the other hand, diversity does not  means that agents have to take different actions all the time. 
 Since the goal is still to achieve large reward via optimizing Eq.~\ref{eq:q_loss}, certain scenarios, such as agents need to take identical actions to accomplish a task, will not be excluded as a result of promoting diversity.

 \subsection{Connections to Current Methods}

Based on the quality-diversity representation, 
one can draw  a key connection between Q-DPP and the existing  methods. 
 It turns out that, under the sufficient condition that if the learned diversity features that correspond to the optimal actions are mutually orthogonal, then Q-DPP 
 degenerates to VDN \cite{sunehag2017value}, QMIX \cite{rashid2018qmix},  and QTRAN \cite{son2019qtran} respectively.
 
 To elaborate such condition, 
 let us  denote $a_i^* = \arg\max Q_i(o_i, a_i)$, $\bm{a}^* = (a_i^*)_{i\in\mathcal{N}}$, $Y^* = \{(o_i, a_i^*)\}_{i=1}^{N}$, with  $\|\bs{b}_{i}\|=1$ and $\bs{b}_i^{\top}\bs{b}_j =0, \forall i\neq j$, then we have 
 \begin{equation}
 \label{eq:bbstar}
 \det \big(\bm{\mathcal{B}}_{Y^*}^{\top}\bm{\mathcal{B}}_{Y^*}\big) = 1. 
 \end{equation}
\textbf{Connection to VDN}. 
When $\{ \bm{b}_j \}_{j=1}^{M}$ are pairwise orthognal, by plugging Eq.~\ref{eq:bbstar} into Eq.~\ref{eq:qd_decomp}, we can obtain   
{\small
\begin{equation}
	Q^{\bm{\pi}}\big(\bm{o}, \bs{a}^*\big)= \sum_{i=1}^N Q_{\mathcal{I}(o_i, a_i^*)}\big(o_i, a_i^*\big).
	\label{eq:vdn}
\end{equation}}
Eq.~\ref{eq:vdn} recovers the exact additivity constraint that VDN applies to factorize the joint value function in meeting Eq.~\ref{eq:igm}.

\textbf{Connection to QMIX}. 
Q-DPP also generalizes QMIX, which adopts a monotonic constraint on the centralized value function to meet Eq.~\ref{eq:igm}. 
Under the special condition when $\{ \bm{b}_j \}_{j=1}^{M}$ are mutually orthogonal, we can easily show that Q-DPP meets the monotonicity condition because  
{\small 
\begin{equation}
\label{eq:qmix}
	\dfrac{\partial Q^{\bm{\pi}}\big(\bm{o}, \bs{a}^*\big)}{\partial Q_{\mathcal{I}(o_i, a_i^*)}\big(o_i, a_i^*\big)} = 1 \ge 0, \ \ \ \forall \mathcal{I}(o_i, a_i^*) \in \mathcal{N}. 
\end{equation}}

\textbf{Connection to QTRAN}. 
Q-DPP also meets the sufficient conditions that QTRAN proposes for meeting Eq.~\ref{eq:igm}, that is, 
{\small
\begin{equation}
\sum_{i=1}^{N} Q_{i}\big(o_{i}, a_{i}\big)-Q^{\bm{\pi}}(\boldsymbol{o}, \boldsymbol{a})+V(\boldsymbol{o})=\Big\{\begin{array}{ll}{0} & {\boldsymbol{a}=\boldsymbol{a}^*} \\ {\geq 0} & {\boldsymbol{a} \neq \boldsymbol{a}^*}\end{array}, 
\label{eq:qtran}
\end{equation}}
where $V(\bm{o})=\max _{\boldsymbol{a}} Q^{\bm{\pi}}(\boldsymbol{o}, \boldsymbol{a})-\sum_{i=1}^{N} Q_{i}\big(o_i, a^*_{i}\big)$. 
Through Eq. \ref{eq:qd_decomp}, we know Q-DPP can have   Eq.~\ref{eq:qtran} written as 
\begin{equation}
\label{eq:qdppqtran}
	\hspace{-5pt} - \log  \det \big(\bm{\mathcal{B}}_{Y}^{\top}\bm{\mathcal{B}}_Y\big) + \max _{\boldsymbol{a}} Q^{\bm{\pi}}(\boldsymbol{o}, \boldsymbol{a})-\sum_{i=1}^{N} Q_{i}\big(o_{i}, a^*_{i}\big).  \hspace{-5pt}
\end{equation}When $\bm{a}=\bm{a}^*$, for pairwise orthogonal $\{ \bm{b}_j \}_{j=1}^{M}$, Q-DPP satisfies the first condition since Eq.~\ref{eq:qdppqtran} equals to zero due to  $\log \det (\bm{\mathcal{B}}_{Y^*}^{\top}\bm{\mathcal{B}}_{Y^*}) = 0$. 
 When $\bm{a} \neq \bm{a}^*$, Eq.~\ref{eq:qdppqtran} equals to $- \log  \det \big(\bm{\mathcal{B}}_{Y}^{\top}\bm{\mathcal{B}}_Y\big) + \log  \det \big(\bm{\mathcal{B}}_{Y^*}^{\top}\bm{\mathcal{B}}_Y^*\big)$, which is always positive  since $\det (\bm{\mathcal{B}}_{Y}^{\top}\bm{\mathcal{B}}_{Y}) < 1, \forall Y \neq Y^*$; Q-DPP thereby  meets the second condition of Eq.~\ref{eq:qtran} and recovers QTRAN.  
 
\textbf{Other Related Work}.
Determinantal SARSA \cite{osogami2019determinantal} applies a normal DPP to model the ground set of the joint state-action pairs $\big\{(s^0, a_1^0, \ldots, a_N^0), \ldots,  (s^{|\mathcal{S}|}, a_1^{|\mathcal{A}|}, \ldots, a_N^{|\mathcal{A}|})\big\}$. It fails to consider at all a proper  ground set that suits multi-agent problems, which leads to the size of  subsets being $2^{|\mathcal{S}||\mathcal{A}|^N}$ that is double-exponential to the number of agents.
Furthermore, unlike Q-DPP that learns decentralized policies, Det. SARSA  learns the centralized  joint-action policy, which strongly limits its applicability for  scalable real-world  tasks.

\begin{algorithm}[h]
\begin{algorithmic}[1]
\STATE \underline{\textbf{DEF} Orthogonalizing-Sampler ($\mathcal{Y}, \bm{\mathcal{D}}, \bm{\mathcal{B}}, \bm{o}$):~~~~~~~~~~~~~~~}\;
	\STATE \textbf{Init}: $ \bm{b}_j \leftarrow \bm{\mathcal{B}}_{[:, j]}, Y \leftarrow \emptyset, B \leftarrow \emptyset, J \leftarrow \emptyset .$\;
    \FOR{each partition $\mathcal{Y}_i$}
        \STATE  Define $\ \ \forall (o, a) \in  \mathcal{Y}_i(o_i)  $\\$q(o, a):=\big\|\bs{b}_{\mathcal{J}(o, a)}\big\|^{2}\exp{\big(\bm{\mathcal{D}}_{\mathcal{J}(o, a), \mathcal{J}(o, a)} \big)}$.  \; \vspace{2pt}
        \STATE  Sample $(\tilde{o}_i, \tilde{a}_i) \in \mathcal{Y}_i(o_i)$ from the distribution: $$\bigg\{\dfrac{q(o, a)}{\sum_{{\color{red}(\hat{o}, \hat{a}) \in \mathcal{Y}_i(o_i)} } q(\hat{o},\hat{a})}\bigg\}_{{(o, a) \in \mathcal{Y}_i(o_i)}}.$$\; \vspace{-6pt}
        \STATE  Let $Y \leftarrow Y \cup (\tilde{o}_i, \tilde{a}_i), B \leftarrow B \cup \bs{b}_{\mathcal{J}(\tilde{o}_i, {\tilde{a}_i})},$ \\ $\ \ \ \ \ \ J \leftarrow J \cup \mathcal{J}(\tilde{o}_i, {\tilde{a}_i})$.\;
        \STATE // \emph{Gram-Schmidt orthogonalization} 
        \STATE Set $\bs{b}_j=\amalg_{\operatorname{span}\{B\}}\left(\bs{b}_j\right), \forall j \in \{1,...,M\} - J$\ \     \; 
    \ENDFOR 
    \STATE \textbf{Return}: $Y$.\; \\
    \vspace{-8pt}
    \underline{~~~~~~~~~~~~~~~~~~~~~~~~~~~~~~~~~~~~~~~~~~~~~~~~~~~~~~~~~~~~~~~~~~~~~~~~~~~~~~~~~~~~~~}
    \vspace{-5pt}
    \STATE 
\underline{~~~~~~~~~~~~~~~~~~~~~~~~~~~~~~~~~~~~~~~~~~~~~~~~~~~~~~~~~~~~~~~~~~~~~~~~~~~~~~~~~~~~~~}
\STATE \underline{\textbf{DEF} Determinantal-Q-Learning $(\theta=[\theta_{\bm{\mathcal{D}}}, \theta_{\bm{\mathcal{B}}}], \mathcal{Y} $):~~~~}\;
\STATE \textbf{Init}: $\theta^{-}\leftarrow\theta$, $D \leftarrow \emptyset$.
\FOR{each time-step}
    \STATE Collect observations $\bm{o} = [o_1, \ldots,o_N]$ for all agents.\;
    \STATE $\bs{a}=\textbf{\text{Orthogonalizing-Sampler}}(\mathcal{Y}, \theta_{\bm{\mathcal{D}}}, \theta_{\bm{\mathcal{B}}}, \bm{o})$.\;
    \STATE Execute $\bs{a}$, store the transition $\langle \bm{o}, \bm{a}, \mathcal{R}, \bm{o}'\rangle$ in $D$.\;
    \STATE Sample a  mini-batch of $\{\langle \bm{o}, \bm{a}, \mathcal{R},  \bm{o}' \rangle\}_{j=1}^{E}$ from $D$.\;
    \STATE Compute for each transition in the mini-batch $\ \ \ \ \ \ \ $   \\  $ \max_{\bm{a}{'}} Q\big(\bm{o}', \bm{a}{'}; \theta^{-}\big)$\\ $\ \ \ \ \ =  \log \det \big( \bm{\mathcal{L}}_{Y=\{(o_1', a_1^*),..., (o_N', a_N^*)\}}\big)$ \\\vspace{3pt} where $\ \ \ \ $  // \emph{off-policy decentralized execution}    $\ \ \ \ \ \ \ \ $\\ \vspace{1pt} { $a_i^* = \arg \max_{a_i \in \mathcal{A}_i}\Big[\big\|\theta^{-}_{\bs{b}_{\mathcal{J}(o_i', a_i)}}\big\|^{2}$ \\ $\hspace{85pt} \cdot \exp{\big(\theta^{-}_{\bm{{\mathcal{D}}}_{\mathcal{J}(o_i', a_i), \mathcal{J}(o_i', a_i)}}}\big)\Big]. $}\;    
    \STATE // \emph{centralized training}
    \STATE Update $\theta$ by minimizing $\mathcal{L}(\theta)$ defined in Eq.~\ref{eq:q_loss}.\;
    \STATE Update target $\theta^{-}=\theta$  periodically.
\ENDFOR
\STATE \textbf{Return}: $\theta_{\bm{\mathcal{D}}}, \theta_{\bm{\mathcal{B}}}$.\; \\
\end{algorithmic}
\caption{Multi-Agent Determinantal Q-Learning}
\label{algo:main_algo1}
\end{algorithm}

\subsection{Sampling from Q-DPP}
\label{sec:sample}
Agents need to explore the environment effectively during  training; however,  how to  sample from  Q-DPPs defined in Eq.~\ref{def:q-dpp} is still unknown. 
In fact,  sampling from the DPPs with partition-matroid constraint is a non-trivial task.
So far, the best known exact sampler for partitioned DPPs has $\mathcal{O}(m^p)$ time complexity with $m$ being the ground-set size and $p$ being the number of partitions \cite{li2016fast,celis2017complexity}. 
Nonetheless, these samplers still pose great computational challenges for multi-agent learning tasks and cannot scale to large number of agents because we have  $m=|\mathcal{C}(\bm{o})|=|\mathcal{A}|^N$ for multi-agent learning tasks. 

In this work,  we instead adopt a biased yet tractable sampler for Q-DPP.
Our sampler is an application of the sampling-by-projection idea in \citet{celis2018fair} and \citet{chen2018fast} which  leverages the property that Gram-Schmidt process preserves the determinant.
One benefit of our sampler is that it  promotes  efficient explorations among agents during training.
Importantly, it enjoys only linear-time complexity \emph{w.r.t.} the number of agents.
The intuition is as follows. 

\textbf{Additional Notations}. 
In a Euclidean space $\mathbb{R}^n$ equipped with an inner product $\langle\bm{\cdot}, \bm{\cdot}\rangle$, 
let  $\mathcal{U} \subseteq \mathbb{R}^n$ be any linear subspace, and $\mathcal{U}^{\perp}$ be its orthogonal complement  $\mathcal{U}^{\perp}:= \{x \in \mathbb{R}^n | \langle x, y\rangle=0,  \forall y \in \mathcal{U}\}$.
We define an orthogonal projection operator,  $\amalg_{\mathcal{U}}: \mathbb{R}^n \rightarrow \mathbb{R}^n$, such that $\forall \bm{u} \in \mathbb{R}^n, \text{if } \bm{u}=\bm{u}_1+\bm{u}_2 \text{ with } \bm{u}_1 \in \mathcal{U} \text{ and } \bm{u}_2 \in \mathcal{U}^{\perp}, \text{ then } \amalg_{\mathcal{U}}(\bm{u})=\bm{u}_2$.

Gram-Schmidt  \cite{noble1988applied} is a process for orthogonalizing a set of vectors; given a set of linearly independent vectors $\{\bm{w}_i\}$, it outputs a mutually orthogonal set of vectors $\{\hat{\bm{w}}_i \}$ by computing 
$\hat{\bm{w}}_i: =\amalg_{\mathcal{U}_i} (\bm{\bm{w}}_i)$ where  $\mathcal{U}_i = \operatorname{span}\{\bm{w}_1,\ldots,\bm{w}_{i-1}\}$.
Note that we neglect the normalizing step of Gram-Schmidt in this work.
Finally, if the rows $\{\bm{w}_i\}$ of a matrix $\bm{\mathcal{W}}$ are mutually orthogonal,  we can compute the determinant by $\det(\bm{\mathcal{W}}\bm{\mathcal{W}}^{\top})= \prod \|\bm{w}_i\|^2$.
The Q-DPP sampler is built upon the following property. 
\begin{proposition}[Volume preservation of Gram-Schmidt, see Chapter 7 in \citet{shafarevich2012linear}, also Lemma 3.1 in \citet{celis2018fair}.]
Let $\mathcal{U}_i = \operatorname{span}\{\bm{w}_1,\ldots,\bm{w}_{i-1}\}$ and $\bm{w}_i \in \mathbb{R}^P$ be the $i$-th  row of   $\bm{\mathcal{W}} \in \mathbb{R}^{M\times P}$,  then $  \prod_{i=1}^{M}\|\amalg_{\mathcal{U}_i}(\bm{w}_i)\|^2 = \det(\bm{\mathcal{W}}\bm{\mathcal{W}}^{\top})$. 
\label{prop1:gramschmidt}
\vspace{-5pt}
\end{proposition}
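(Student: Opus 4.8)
The plan is to reduce the Gram determinant $\det(\bm{\mathcal{W}}\bm{\mathcal{W}}^{\top})$ to the product of squared norms through a unit-triangular change of basis that the Gram--Schmidt process supplies essentially for free. Write $\hat{\bm{w}}_i := \amalg_{\mathcal{U}_i}(\bm{w}_i)$ for the orthogonalized rows and collect them as the rows of a matrix $\hat{\bm{\mathcal{W}}} \in \mathbb{R}^{M\times P}$. First I would record the two standard invariants of Gram--Schmidt, both provable by a short induction on $i$: the spans agree, $\operatorname{span}\{\bm{w}_1,\ldots,\bm{w}_i\} = \operatorname{span}\{\hat{\bm{w}}_1,\ldots,\hat{\bm{w}}_i\}$ for every $i$, and the vectors $\{\hat{\bm{w}}_i\}$ are mutually orthogonal.

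The key structural observation is that, by the definition of the projection operator, $\bm{w}_i$ equals $\hat{\bm{w}}_i$ plus the component of $\bm{w}_i$ lying in $\mathcal{U}_i$; since $\mathcal{U}_i = \operatorname{span}\{\hat{\bm{w}}_1,\ldots,\hat{\bm{w}}_{i-1}\}$ by the span invariant, that removed component is a linear combination of $\hat{\bm{w}}_1,\ldots,\hat{\bm{w}}_{i-1}$ alone. Hence there exist scalars $\mu_{ij}$ with $\bm{w}_i = \hat{\bm{w}}_i + \sum_{j<i}\mu_{ij}\hat{\bm{w}}_j$, which in matrix form reads $\bm{\mathcal{W}} = \bm{R}\,\hat{\bm{\mathcal{W}}}$ where $\bm{R} \in \mathbb{R}^{M\times M}$ is lower triangular with unit diagonal; in particular $\det(\bm{R}) = 1$.

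With the factorization in hand I would compute $\bm{\mathcal{W}}\bm{\mathcal{W}}^{\top} = \bm{R}\,(\hat{\bm{\mathcal{W}}}\hat{\bm{\mathcal{W}}}^{\top})\,\bm{R}^{\top}$. By mutual orthogonality, $\hat{\bm{\mathcal{W}}}\hat{\bm{\mathcal{W}}}^{\top} = \diag(\|\hat{\bm{w}}_1\|^2,\ldots,\|\hat{\bm{w}}_M\|^2)$, which is exactly the determinant identity for orthogonal rows noted just before the statement. Taking determinants of this product of square matrices and using $\det(\bm{R}) = \det(\bm{R}^{\top}) = 1$ yields $\det(\bm{\mathcal{W}}\bm{\mathcal{W}}^{\top}) = \prod_{i=1}^{M}\|\hat{\bm{w}}_i\|^2 = \prod_{i=1}^{M}\|\amalg_{\mathcal{U}_i}(\bm{w}_i)\|^2$, which is the claim.

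The only place demanding care, and hence the main obstacle, is the degenerate regime $M > P$ (or, more generally, linearly dependent rows): then some $\hat{\bm{w}}_i = 0$, a coefficient $\mu_{ij}$ attached to a vanishing $\hat{\bm{w}}_j$ is ill-defined and should simply be set to $0$, and one checks that the relation $\bm{w}_i = \hat{\bm{w}}_i + \sum_{j<i}\mu_{ij}\hat{\bm{w}}_j$ still holds because the offending term contributes nothing. In that case both sides vanish (the left because $\bm{\mathcal{W}}\bm{\mathcal{W}}^{\top}$ is rank deficient, the right because some factor $\|\hat{\bm{w}}_i\|^2$ is zero), so the identity stays consistent. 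An alternative route, avoiding the triangular bookkeeping, is induction on $M$ via the Schur complement of the Gram matrix, using that the Schur complement of the leading $(M-1)\times(M-1)$ block equals the squared distance from $\bm{w}_M$ to $\mathcal{U}_M$, i.e. $\|\amalg_{\mathcal{U}_M}(\bm{w}_M)\|^2$; but this needs the leading block to be invertible, so the triangular argument is preferable since it handles dependence uniformly.
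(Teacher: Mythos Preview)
Your proof is correct and is actually the cleaner of the two arguments. The paper takes a different, more computational route: rather than factoring $\bm{\mathcal{W}}$ itself as $\bm{R}\hat{\bm{\mathcal{W}}}$ with a unit lower-triangular $\bm{R}$, it performs Gaussian elimination directly on the Gram matrix $\bm{\mathcal{W}}\bm{\mathcal{W}}^{\top}$ and argues that the resulting upper-triangular matrix has diagonal entries $\|\amalg_{\mathcal{U}_i}(\bm{w}_i)\|^2$; this is worked out explicitly for $M=3$ via two auxiliary lemmas about the single-vector orthogonalization operator, and then asserted in general. Your factorization approach buys a uniform argument for all $M$ without the entrywise bookkeeping, and your explicit handling of the degenerate case $M>P$ (where both sides vanish) is something the paper does not address. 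Conversely, the paper's elimination viewpoint makes the connection to row operations on the Gram matrix more visible, which is perhaps pedagogically useful. The two are of course equivalent at the level of linear algebra: your identity $\bm{\mathcal{W}}\bm{\mathcal{W}}^{\top}=\bm{R}(\hat{\bm{\mathcal{W}}}\hat{\bm{\mathcal{W}}}^{\top})\bm{R}^{\top}$ is exactly the LDL${}^{\top}$ decomposition that Gaussian elimination on a symmetric matrix produces, so the paper is computing the same factorization one row at a time.
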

We also provide an intuition by Gaussian elimination in \emph{Appendix \ref{app:prop1}}. 
Proposition \ref{prop1:gramschmidt}  suggests that the determinant  of a Gram matrix is invariant to applying the Gram-Schmidt orthogonalization on the rows of that Gram matrix. 
In Q-DPP's case, a kernel matrix with  mutually orthogonal rows can largely simplify the sampling process.  
In such scenario, an effective sampler can be that, from each partition $\mathcal{Y}_i$, sample an item $i \in \mathcal{Y}_i$ with $\mathbb{P}(i) \propto \|d_i\bm{b}_i^{\top}\|^2$, then add $i$ to the output sample $Y$ and move to the next partition; the above steps iterate until all partitions are covered.
It is effortless to see that the probability of obtaining  sample  $Y$ in such a way  is 
\begin{align}
	\mathbb{P}(Y)& \propto \prod_{i \in Y} \|d_i\bm{b}_i^{\top}\|^2=\prod_{i \in Y} \|\bm{w}_i\|^2 = \det(\bm{\mathcal{W}}_{Y}\bm{\mathcal{W}}_{Y}^{\top}) \nonumber \\
	& \propto \det(\bm{\mathcal{L}}_{Y}). \hspace{-15pt}
\end{align}
We formally describe the  orthogonalizing sampling procedures in Algorithm \ref{algo:main_algo1}. 
As it is suggested in \citet{celis2018fair}, 
the time complexity of the sampling function is $\mathcal{O}(NMP)$ (see also the breakdown analysis for each step in \emph{Appendix} \ref{timecomp}), given the input size being $\mathcal{O}(MP)$, our sampler thus enjoys linear-time complexity \emph{w.r.t} the agent number.

Though the Gram-Schmidt process can preserve the determinant and simply the sampling process, it comes at a prize of introducing \textbf{bias} on the normalization term. 
Specifically, the normalization in our proposed sampler is conducted at each agent/partition level $\mathcal{Y}_i(o_i)$ (see the red in line $5$) which does not match Eq.~\ref{def:q-dpp} that  suggests normalizing by listing  all valid samples considering all partitions $\mathcal{C}({\bm{o}})$; this directly leads to a sampled subset from our sampler having \emph{larger} probability  than what Q-DPP defines.
Interestingly, it turns out that such violation  can be controlled through bounding the singular values of each partition in the kernel matrix (see Assumption \ref{assump:single}), a technique  also known as the $\beta$-balance condition introduced in $P$-DPP  \cite{celis2018fair}.
\begin{assumption}[Singular-Value Constraint on Partitions]
\label{assump:single}
	For a Q-DPP defined in Definition \ref{def:dpp}, which is parameterized by $\bm{\mathcal{D}} \in \mathbb{R}^{M \times M}, \bm{\mathcal{B}} \in \mathbb{R}^{P \times M}$ and $\bm{\mathcal{W}}: = \bm{\mathcal{D}}\bm{\mathcal{B}}^{\top}$, let $\sigma_1 \ge \ldots \ge \sigma_P$ represent the singular values of $\bm{\mathcal{W}}$, and $\hat{\sigma}_{i, 1} \ge \ldots \ge \hat{\sigma}_{i, P}$ denote the singular values of $\bm{\mathcal{W}}_{\mathcal{Y}_i}$ that is the submatrix of $\bm{\mathcal{W}}$ with the rows and columns corresponding  to the $i$-th partition $\mathcal{Y}_i$\ , we assume $\forall j\in \{1,\ldots,P\},  \ \exists \  \delta \in (0, 1] \ \text{, s.t., \ }  \min_{i\in \{1,\ldots,N\}} \hat{\sigma}_{i, j}^2 / \delta \ge \sigma_j^2$  holds.   
\end{assumption}
\begin{theorem}[Approximation Guarantee of Orthogonalizing Sampler]
For a Q-DPP defined in Definition \ref{def:dpp}, under Assumption \ref{assump:single}, the Orthogonalizing Sampler described in Algorithm \ref{algo:main_algo1} returns a sampled subset $Y \in \mathcal{C}(\bm{o})$ with probability $\mathbb{P}(Y) \le  1/\delta^N \cdot \tilde{\mathbb{P}}(\bm{Y}=Y)$ where $N$ is the number of agents, $\tilde{\mathbb{P}}$ is defined in Eq. \ref{def:q-dpp}, $\delta$ is defined in Assumption \ref{assump:single}. 
\label{theorem:sample}
\end{theorem}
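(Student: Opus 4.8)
The plan is to compute the sampler's output distribution in closed form, reduce the claim to a single inequality between normalizing constants, and then discharge that inequality with the singular-value control of Assumption~\ref{assump:single}.

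First I would pin down $\mathbb{P}(Y)$. Writing each row as $\bm{w}_j = d_j\bm{b}_j^{\top}$ and observing that the scalar quality $d_j$ commutes with any projection, the weight used in line~5 of Algorithm~\ref{algo:main_algo1} is $d_{\mathcal{J}(o,a)}^2\,\|\amalg_{\mathcal{U}_i}(\bm{b}_{\mathcal{J}(o,a)})\|^2=\|\amalg_{\mathcal{U}_i}(\bm{w}_{\mathcal{J}(o,a)})\|^2$, where $\mathcal{U}_i=\operatorname{span}\{\bm{w}_{j_1},\dots,\bm{w}_{j_{i-1}}\}$ is spanned by the items chosen so far (the Gram--Schmidt step of line~8 is exactly what turns the remaining weights into these projected norms). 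Hence, if the sampler returns $Y=\{j_1,\dots,j_N\}$,
\[
\mathbb{P}(Y)=\prod_{i=1}^{N}\frac{\|\amalg_{\mathcal{U}_i}(\bm{w}_{j_i})\|^2}{Z_i(Y)},\qquad Z_i(Y):=\!\!\sum_{(o,a)\in\mathcal{Y}_i(o_i)}\!\!\|\amalg_{\mathcal{U}_i}(\bm{w}_{\mathcal{J}(o,a)})\|^2 .
\]
The numerator telescopes: applying Proposition~\ref{prop1:gramschmidt} to the rows of $\bm{\mathcal{W}}_Y$ gives $\prod_{i=1}^N\|\amalg_{\mathcal{U}_i}(\bm{w}_{j_i})\|^2=\det(\bm{\mathcal{W}}_Y\bm{\mathcal{W}}_Y^{\top})=\det(\bm{\mathcal{L}}_Y)$, so $\mathbb{P}(Y)=\det(\bm{\mathcal{L}}_Y)\big/\prod_{i=1}^N Z_i(Y)$. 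Since $\tilde{\mathbb{P}}(\bm{Y}=Y)=\det(\bm{\mathcal{L}}_Y)/Z$ with $Z:=\sum_{Y'\in\mathcal{C}(\bm{o})}\det(\bm{\mathcal{L}}_{Y'})$ by Eq.~\ref{def:q-dpp}, the common factor $\det(\bm{\mathcal{L}}_Y)$ cancels and the theorem reduces to the purely spectral inequality
\[
\prod_{i=1}^{N}Z_i(Y)\ \ge\ \delta^{N}\,Z\qquad\text{for every }Y\in\mathcal{C}(\bm{o}).
\]

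The heart of the argument is this last inequality, and it is where Assumption~\ref{assump:single} enters. I would rewrite each partition normalizer as a trace against a projection, $Z_i(Y)=\tr\!\big(\Pi_{\mathcal{U}_i^{\perp}}\,\bm{\mathcal{W}}_{\mathcal{Y}_i}^{\top}\bm{\mathcal{W}}_{\mathcal{Y}_i}\big)$ with $\dim\mathcal{U}_i\le i-1$, and lower-bound it via the variational (Ky~Fan) characterization of the trace of a PSD matrix against a projection by the appropriate tail sum of the partition singular values $\hat{\sigma}_{i,j}^2$. Symmetrically, I would upper-bound the global normalizer $Z$ in terms of the full singular values $\sigma_j^2$ of $\bm{\mathcal{W}}$, keeping the \emph{same} flag $\mathcal{U}_1\subseteq\cdots\subseteq\mathcal{U}_N$ induced by $Y$ so that the two estimates are comparable level by level. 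Assumption~\ref{assump:single}, i.e.\ $\hat{\sigma}_{i,j}^2\ge\delta\,\sigma_j^2$, is then invoked once per partition to trade each global factor for the corresponding partition factor, contributing exactly one power of $\delta$ at each of the $N$ levels and producing the factor $\delta^{N}$. This is precisely the $\beta$-balance mechanism of \citet{celis2018fair}, adapted here to the observation-dependent valid set $\mathcal{C}(\bm{o})$.

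The main obstacle I anticipate is making the two sides genuinely comparable rather than bounding $Z$ and $\prod_i Z_i(Y)$ in isolation: naive Hadamard- or cardinality-type estimates introduce spurious $|\mathcal{A}|^N$ or $\binom{P}{N}$ factors that swamp $\delta^N$. The delicacy is structural: $Z$ is a sum over all of $\mathcal{C}(\bm{o})$ whose per-term Gram--Schmidt projections run over the \emph{summation} indices, whereas $\prod_i Z_i(Y)$ uses the single \emph{fixed} flag $\mathcal{U}_i(Y)$, so the comparison must hold uniformly over every realized flag and must be matched to the combinatorial sum structure one partition at a time, so that the $|\mathcal{A}|$-scale factors cancel and only the per-partition $\delta$ survives. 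Establishing that the projected partition spectrum dominates the corresponding global contribution along an arbitrary realized flag—so that the per-level $\delta$-losses multiply cleanly to $\delta^{N}$—is the technical crux, and Assumption~\ref{assump:single} is exactly the hypothesis that guarantees it.
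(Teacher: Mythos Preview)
Your reduction is exactly the paper's: write $\mathbb{P}(Y)$ as a product of per-partition ratios, collapse the numerator to $\det(\bm{\mathcal{L}}_Y)$ via Proposition~\ref{prop1:gramschmidt}, and reduce the theorem to $\prod_{i=1}^N Z_i(Y)\ge\delta^N Z$. Your lower bound on each $Z_i$ by the tail $\sum_{k\ge i}\hat{\sigma}_{i,k}^2$ (Eckart--Young / Ky~Fan, since $\dim\mathcal{U}_i\le i-1$) and the subsequent use of Assumption~\ref{assump:single} to replace $\hat{\sigma}_{i,k}^2$ by $\delta\,\sigma_k^2$ also matches the paper line for line.

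The gap is in the last leg, where you propose to upper-bound $Z$ ``keeping the same flag $\mathcal{U}_1\subseteq\cdots\subseteq\mathcal{U}_N$ induced by $Y$.'' That idea does not go through: $Z=\sum_{Y'\in\mathcal{C}(\bm{o})}\det(\bm{\mathcal{L}}_{Y'})$, and each summand, written via Proposition~\ref{prop1:gramschmidt}, uses the flag induced by \emph{its own} $Y'$, not the fixed flag of $Y$. There is no clean way to force every term of $Z$ onto a single extraneous flag without exactly the spurious combinatorial blow-up you warn about. The paper sidesteps flags entirely at this point. It invokes the Cauchy--Binet identity (stated there as Lemma~3.1 of \citet{deshpande2006matrix}),
\[
\sum_{|Y'|=N}\det\!\big(\bm{\mathcal{W}}_{Y'}\bm{\mathcal{W}}_{Y'}^{\top}\big)\;=\;\sum_{k_1<\cdots<k_N}\sigma_{k_1}^2\cdots\sigma_{k_N}^2\;=\;e_N\!\big(\sigma_1^2,\dots,\sigma_P^2\big),
\]
together with the elementary inequality $\prod_{i=1}^N\sum_{k\ge i}\sigma_k^2\ \ge\ e_N(\sigma_1^2,\dots,\sigma_P^2)$, and then the trivial inclusion $\{Y':|Y'|=N\}\supseteq\mathcal{C}(\bm{o})$ to pass from the unrestricted $N$-subset sum down to $Z$. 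Chaining these with your earlier steps gives
\[
\prod_{i=1}^N Z_i(Y)\ \ge\ \prod_{i=1}^N\sum_{k\ge i}\hat{\sigma}_{i,k}^2\ \ge\ \delta^N\prod_{i=1}^N\sum_{k\ge i}\sigma_k^2\ \ge\ \delta^N e_N(\sigma^2)\ \ge\ \delta^N Z,
\]
which is the inequality you isolated. So the missing ingredient is not another flag argument but the elementary-symmetric-polynomial identity; once you name it, the proof closes immediately.
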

\begin{proof}
\vspace{-8pt}
The  proof is  in \emph{Appendix A.2}. It can also be taken as a special case of Theorem 3.2 in \citet{celis2018fair} when the number of sample from each partition is one. 
\vspace{-8pt}
\end{proof}
Theorem \ref{theorem:sample} effectively suggests a  way to bound the error between our sampler and the true distribution of Q-DPP through  minimizing the difference between $\sigma_j^2$ and $\hat{\sigma}_{i, j}^2$. 

\subsection{Determinantal Q-Learning}

We present the full learning procedures in Algorithm \ref{algo:main_algo1}. 
Determinantal Q-Learning is a CTDE method. 
During training,  agents' explorations are conducted through the orthogonalizing-sampler. 
 The parameters of $\bm{\mathcal{B}}$ and $\bm{\mathcal{D}}$ are updated through Q-learning  in a centralized way by following Eq.~\ref{eq:q_loss}.  
 To meet Assumption \ref{assump:single},  one can implement   an auxiliary loss function of $\max (0, \sigma_j^2 - \hat{\sigma}_{i, j}^2/\delta)$  in addition to   where $\delta$ is a hyper-parameter.
 Given Theorem \ref{theorem:sample}, for large $N$, we know $\delta$ should be set  close to $1$ to make the bound tight.
 In fact, it is worth mentioning that the Gram-Schmidt process adopted in the sampler can  boost the sampling efficiency for multi-agent  training.  
Since agents' diversity features of observation-action pairs are orthogonalized every time after a partition is visited,  agents who act later are essentially coordinated to explore the observation-action space that is distinctive to all previous agents. This speeds up training in early stages. 

During execution, agents only need to access  the parameters in their  own partitions  to compute the greedy action (see line $19$).   
Note that neural networks can be seamlessly applied to represent both $\bm{\mathcal{B}}$ and $\bm{\mathcal{D}}$ to tackle continuous states. Though a full treatment of deep Q-DPP needs substantial future work, we show a proof of concept in \emph{Appendix \ref{deep-qdpp}}. 
Hereafter, we use Q-DPP to represent our  proposed  algorithm.

\section{Experiments}

We compare Q-DPP with  state-of-the-art CTDE solvers for multi-agent cooperative tasks, including COMA~\citep{COMA}, VDN~\citep{sunehag2017value},  QMIX~\citep{rashid2018qmix}, QTRAN  ~\citep{son2019qtran}, and MAVEN~\citep{mahajan2019maven}.
All baselines are imported from PyMARL \cite{samvelyan19smac}. 
Detailed settings are  in  \emph{Appendix \ref{exp_detail}}.
Code is released in \url{https://github.com/QDPP-GitHub/QDPP}.
We consider four cooperative tasks in Fig. ~\ref{fig:mini_games}, all of  which require  non-trivial  value function decomposition to achieve the largest reward. 

\begin{figure}[t!]
     \centering
     \vspace{-8pt}
     \begin{subfigure}[r]{.21\textwidth}
         \centering
         \includegraphics[width=\textwidth]{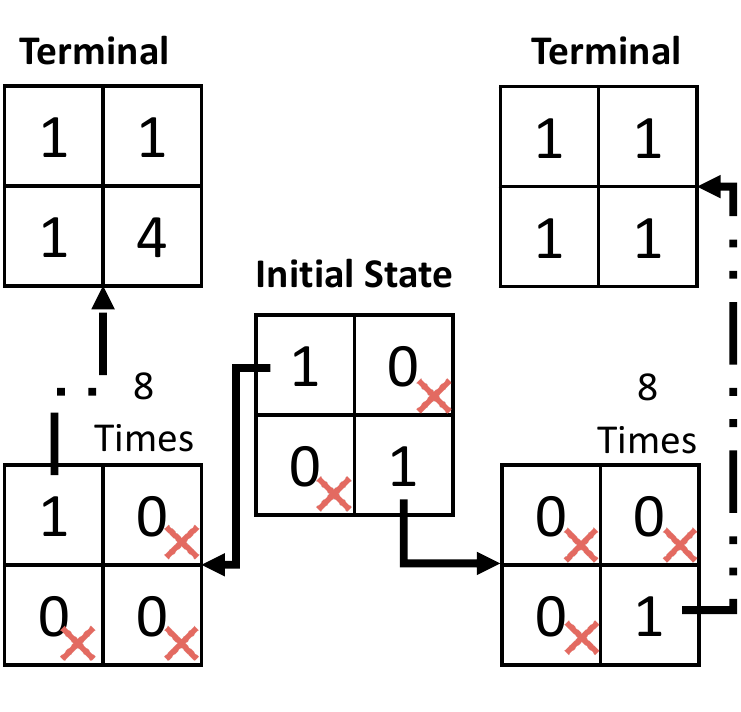}
         \vspace{-20pt}
         \caption{Pathological Stochastic Game}
         \label{fig:nmatrix_mini}
     \end{subfigure}
     \hspace{12pt}
     \begin{subfigure}[r]{.21\textwidth}
         \centering
         \includegraphics[width=\textwidth]{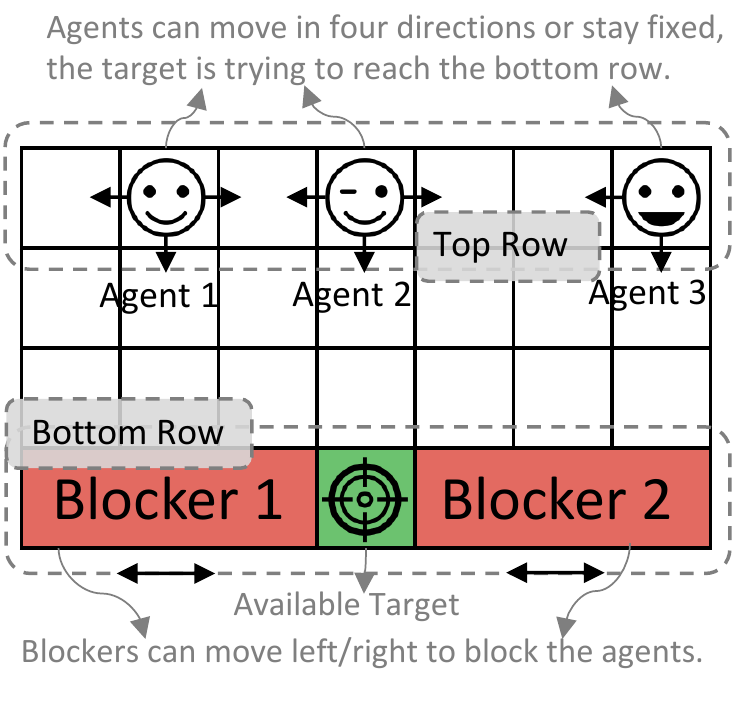}
                  \vspace{-20pt}
         \caption{Blocker Game}
         \label{fig:spread_mini}
     \end{subfigure}
     \\
     \begin{subfigure}[l]{.205\textwidth}
         \centering
         \includegraphics[width=\textwidth]{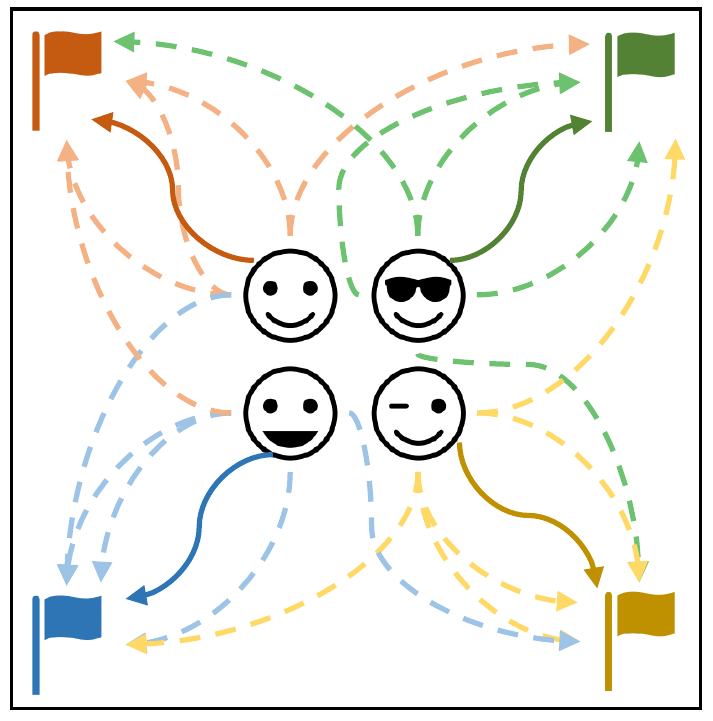}
                  \vspace{-15pt}
         \caption{Coordinated Navigation}
         \label{fig:blocker_mini}
     \end{subfigure}
  \hspace{15pt}
     \begin{subfigure}[l]{.205\textwidth}
         \centering
         \includegraphics[width=\textwidth]{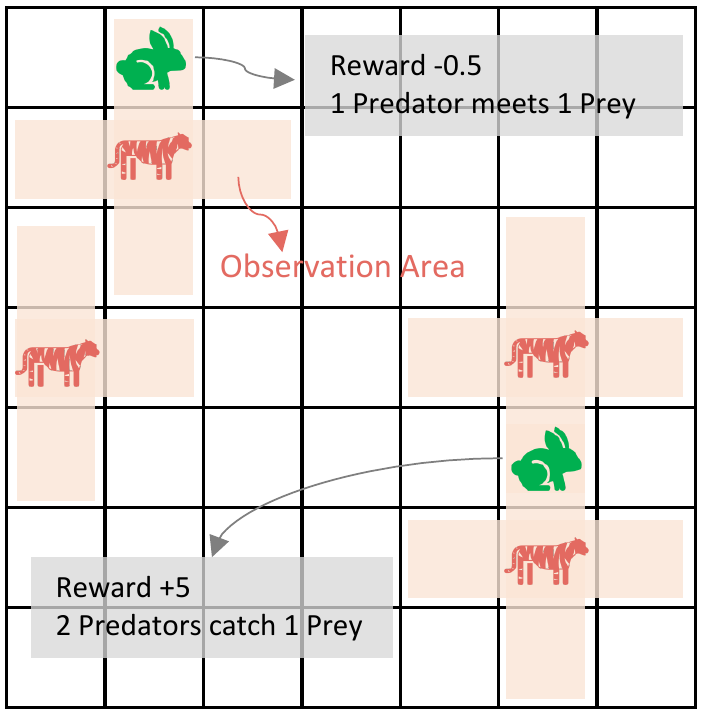}
                           \vspace{-15pt}
         \caption{Predator-Prey World}
         \label{fig:prey_mini}
     \end{subfigure}
     \vspace{-8pt}
     \caption{Multi-agent cooperative tasks. The size of the ground set for each task is a) $176$, b) $420$, c) $720$, d) $3920$.}
     \label{fig:mini_games}
\end{figure}


\begin{figure*}[t!]
\vspace{-5pt}
     \centering
     \begin{subfigure}[r]{.33\textwidth}
         \centering
         \includegraphics[width=\textwidth]{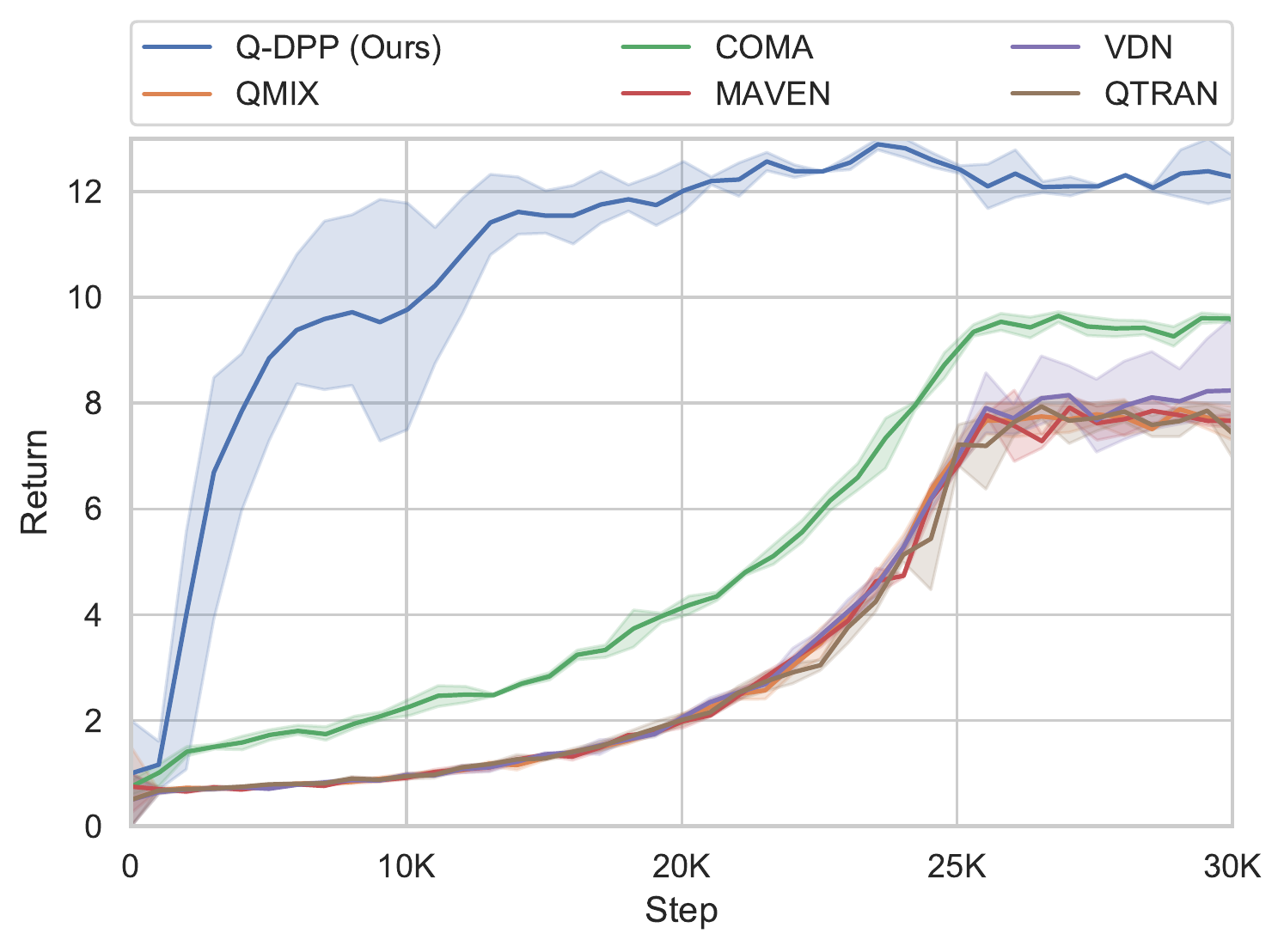}
         \vspace{-20pt}
         \caption{Multi-Step Matrix Game}
         \label{fig:nmatrix_mini_lc1}
     \end{subfigure}
     \begin{subfigure}[r]{.33\textwidth}
         \centering
         \includegraphics[width=\textwidth]{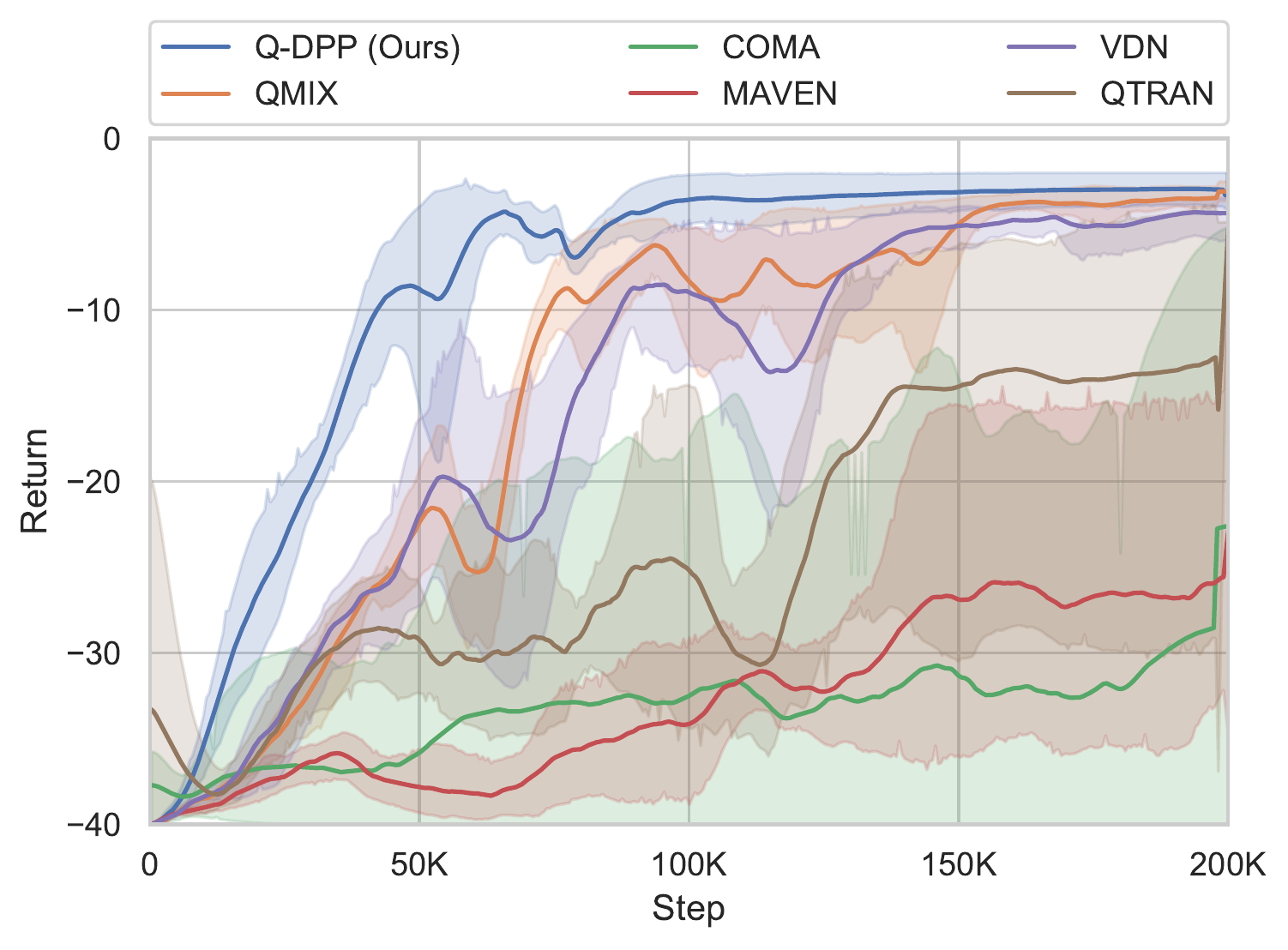}
                  \vspace{-20pt}
         \caption{Blocker Game}
         \label{fig:spread_mini_lc1}
     \end{subfigure}
     \begin{subfigure}[l]{.33\textwidth}
         \centering
         \includegraphics[width=\textwidth]{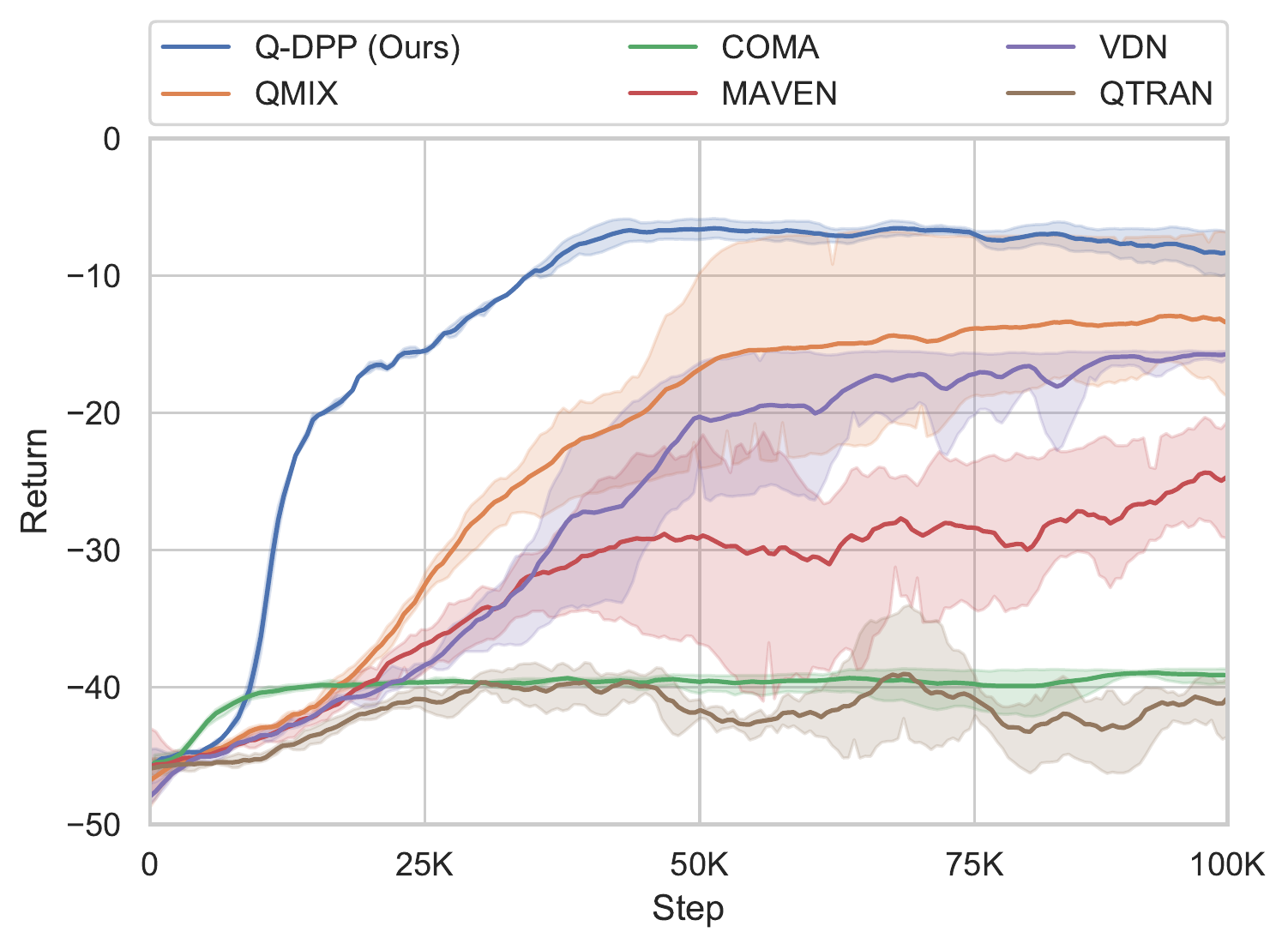}
                  \vspace{-20pt}
         \caption{Coordinated Navigation}
         \label{fig:blocker_mini_lc1}
     \end{subfigure} \\

    \begin{subfigure}[l]{.33\textwidth}
         \centering
         \includegraphics[width=\textwidth]{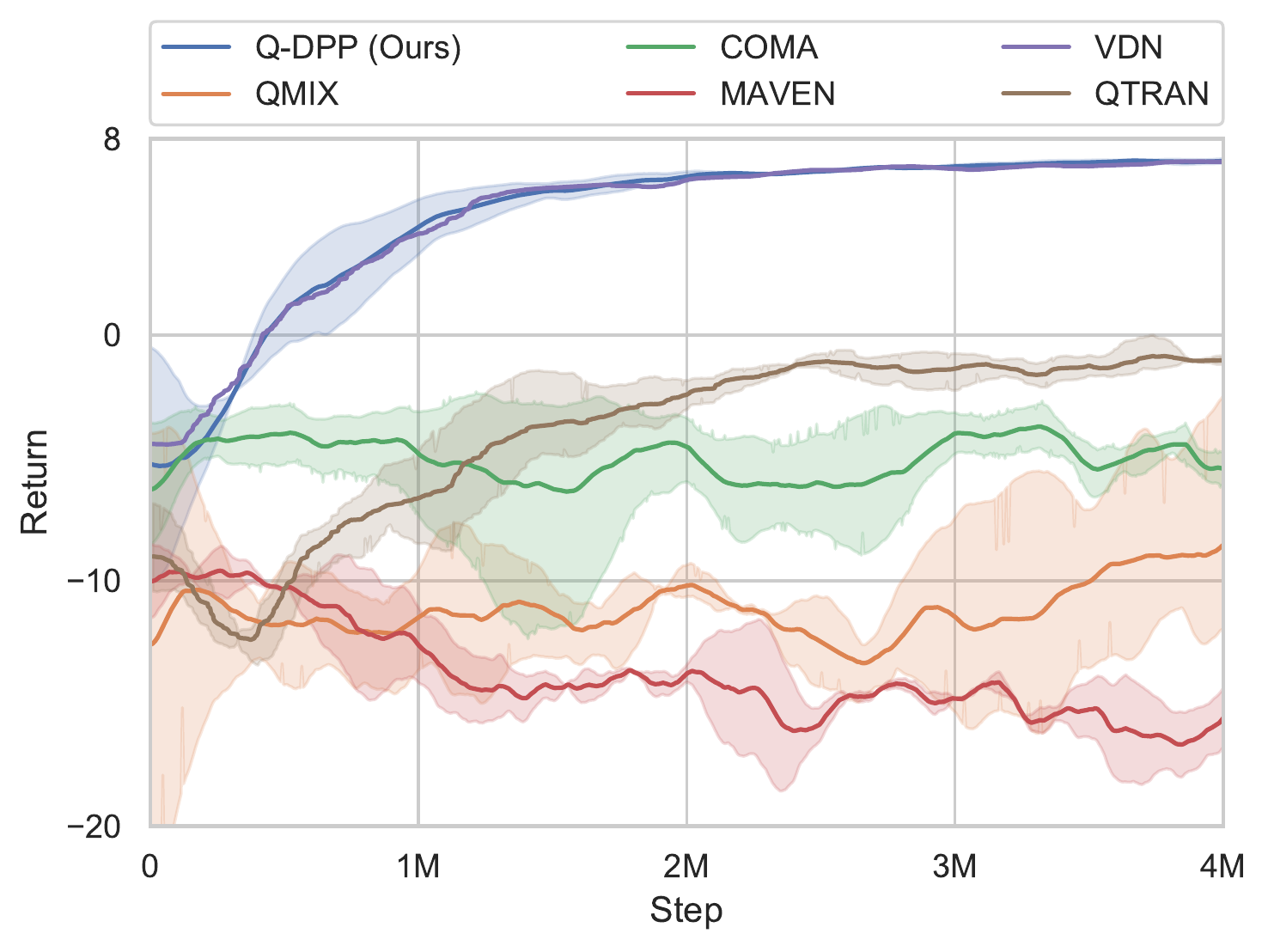}
                  \vspace{-20pt}
         \caption{Predator-Prey World}
         \label{fig:blocker_ablation_lc1}
     \end{subfigure}
     \begin{subfigure}[l]{.33\textwidth}
         \centering
         \includegraphics[width=\textwidth]{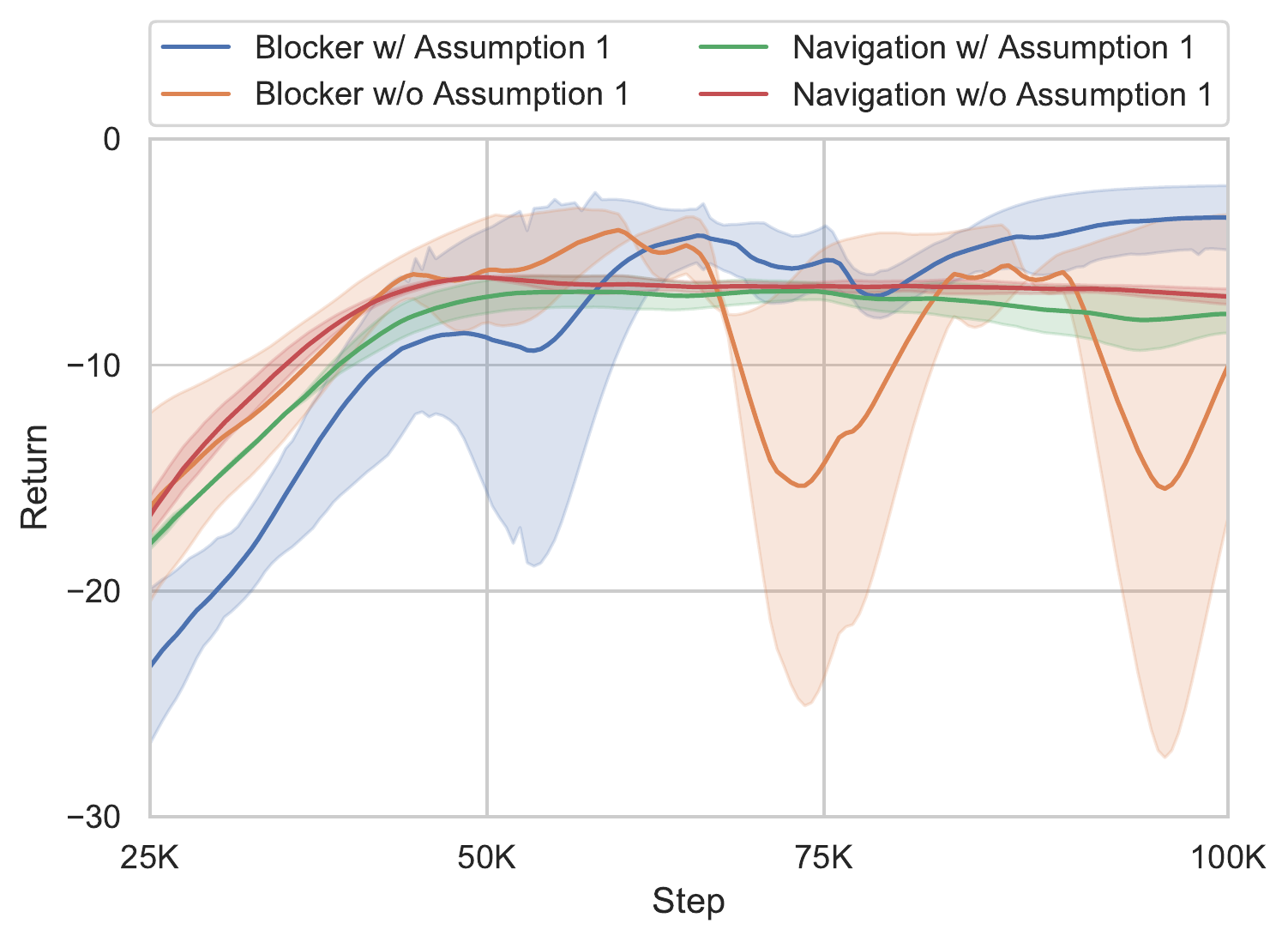}
                  \vspace{-20pt}
         \caption{Ablation study on Assumption \ref{assump:single}  }
         \label{fig:prey_mini_lc3}
     \end{subfigure}
          \begin{subfigure}[l]{.33\textwidth}
         \centering
         \includegraphics[width=\textwidth]{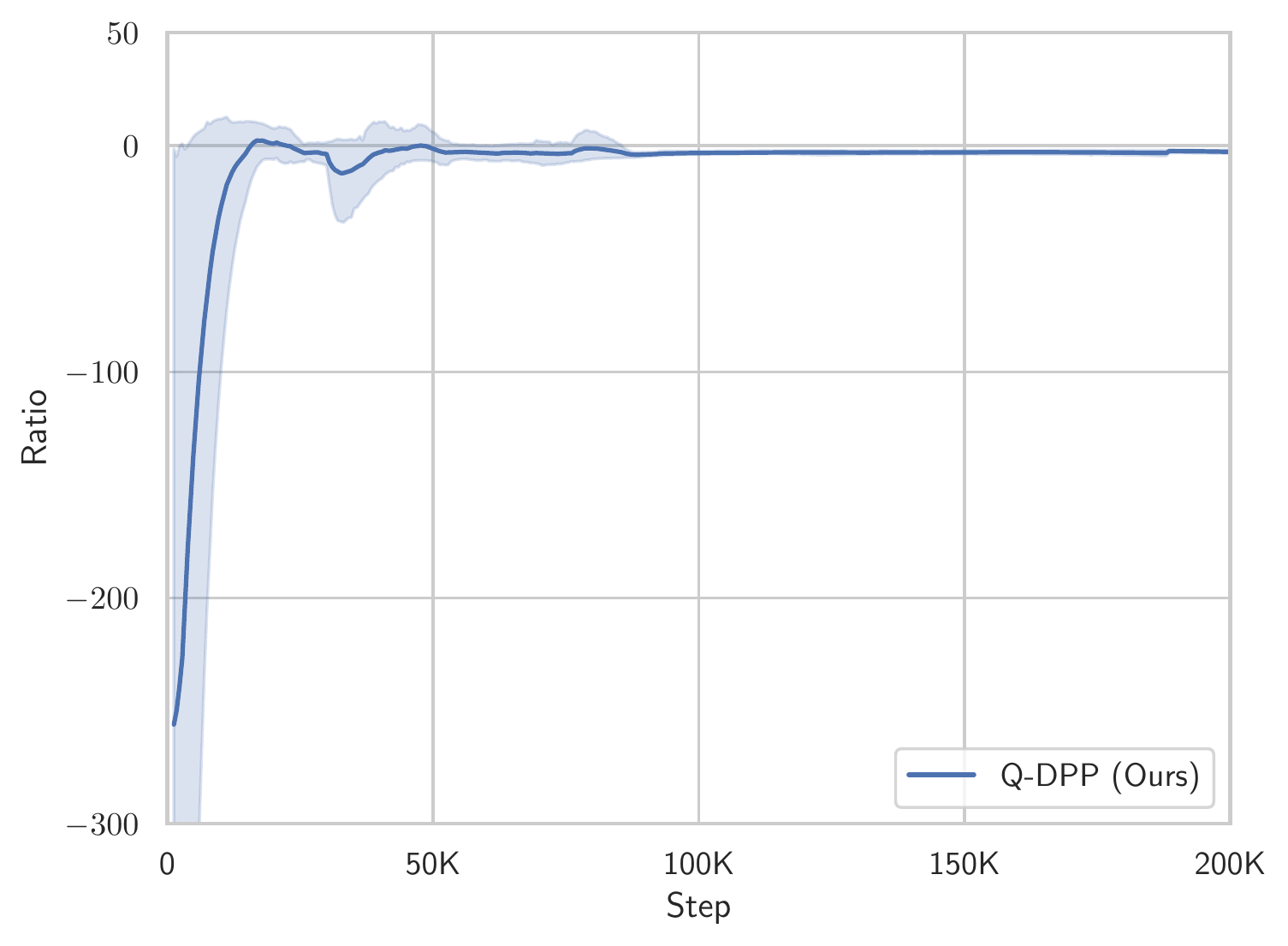}
                  \vspace{-20pt}
         \caption{Diversity / Quality Ratio}
         \label{fig:learned_pro}
     \end{subfigure}
          \vspace{-8pt}
     \caption{\textbf{(a)}-\textbf{(d)}:Performance  over time on different tasks. \textbf{(e)}: Ablation study on Assumption \ref{assump:single} on Blocker game.  \textbf{(f)}: The ratio of diversity to quality, i.e.,  $\log \det(\bm{\mathcal{B}}_{Y}^{\top}\bm{\mathcal{B}}_Y) / \sum_{i=1}^{N}Q_{\mathcal{I}(o_i, a_i)}(o_i, a_i)$, during training on Blocker game.}
     \label{fig:mini_games_lc}
     \vspace{-5pt}
\end{figure*}

\begin{figure*}[t!]
\vspace{0pt}
     \centering
      \hspace{10pt}
     \begin{subfigure}[r]{.3\textwidth}
         \centering
         \includegraphics[width=\textwidth]{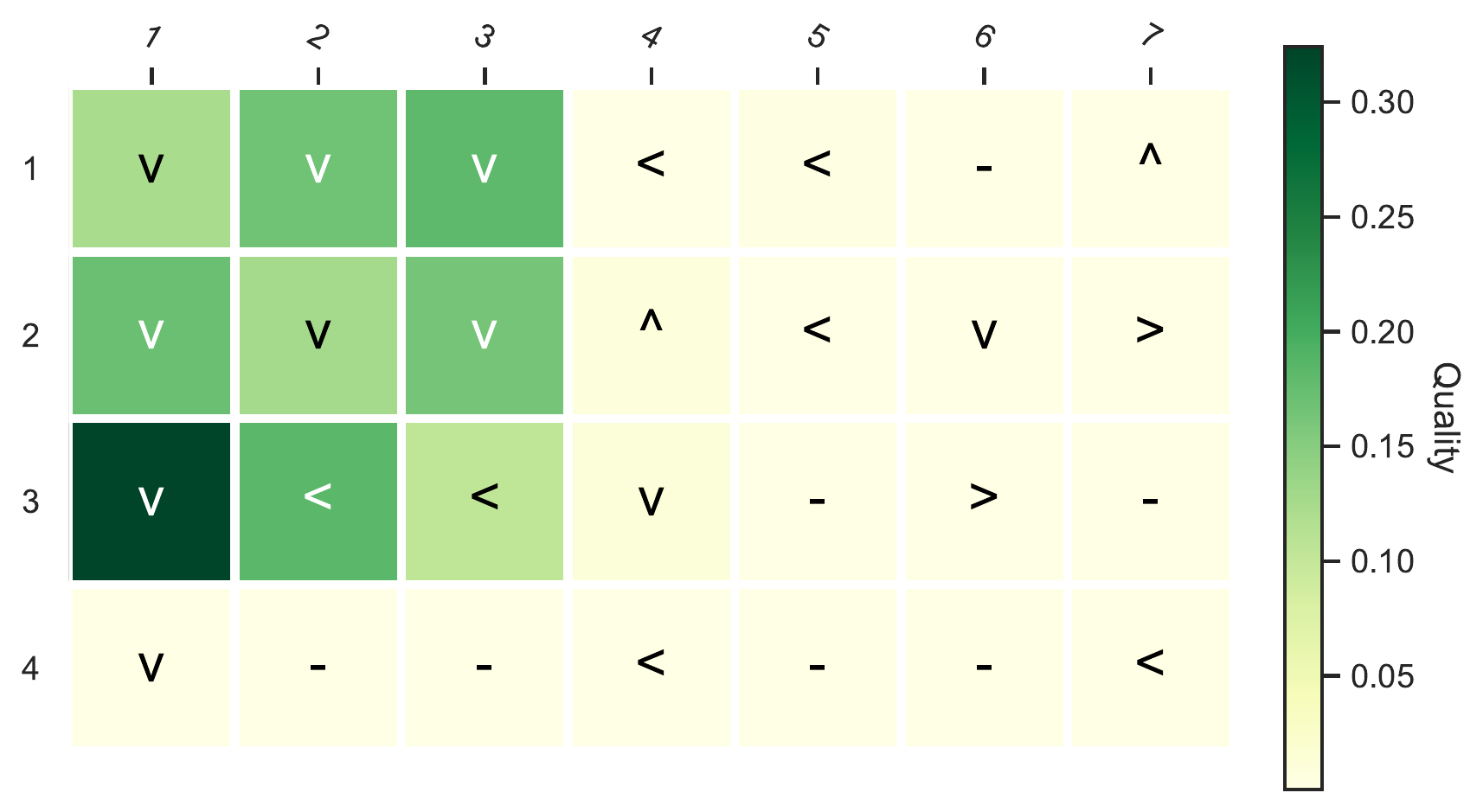}
         \vspace{-15pt}
         \caption{Agent 1.}
         \label{fig:quality_blocker_agent1}
     \end{subfigure}
     \hspace{10pt}
     \begin{subfigure}[r]{.3\textwidth}
         \centering
         \includegraphics[width=\textwidth]{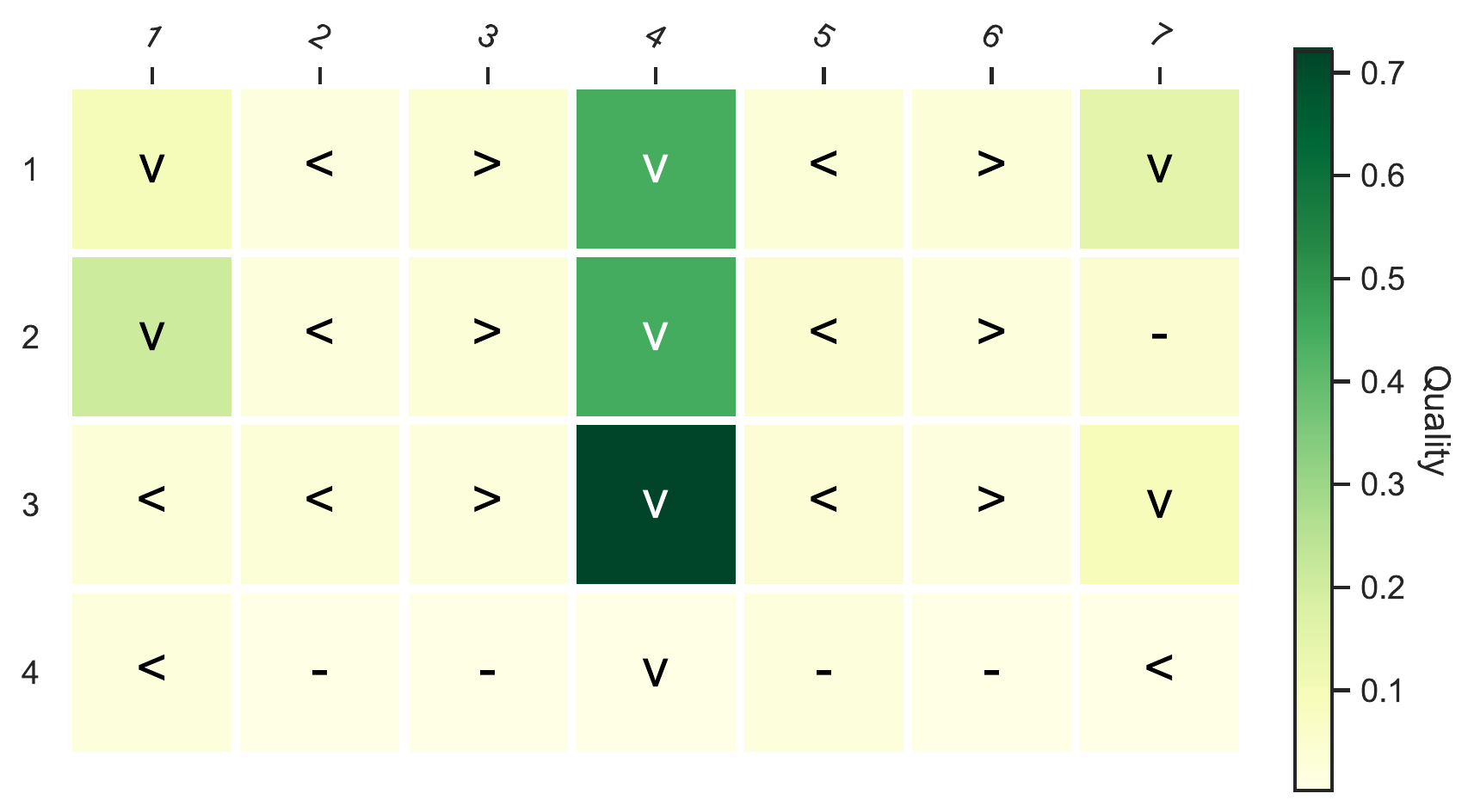}
                  \vspace{-15pt}
         \caption{Agent 2.}
         \label{fig:quality_blocker_agent2}
     \end{subfigure}
          \hspace{10pt}
      \begin{subfigure}[r]{.3\textwidth}
         \centering
         \includegraphics[width=\textwidth]{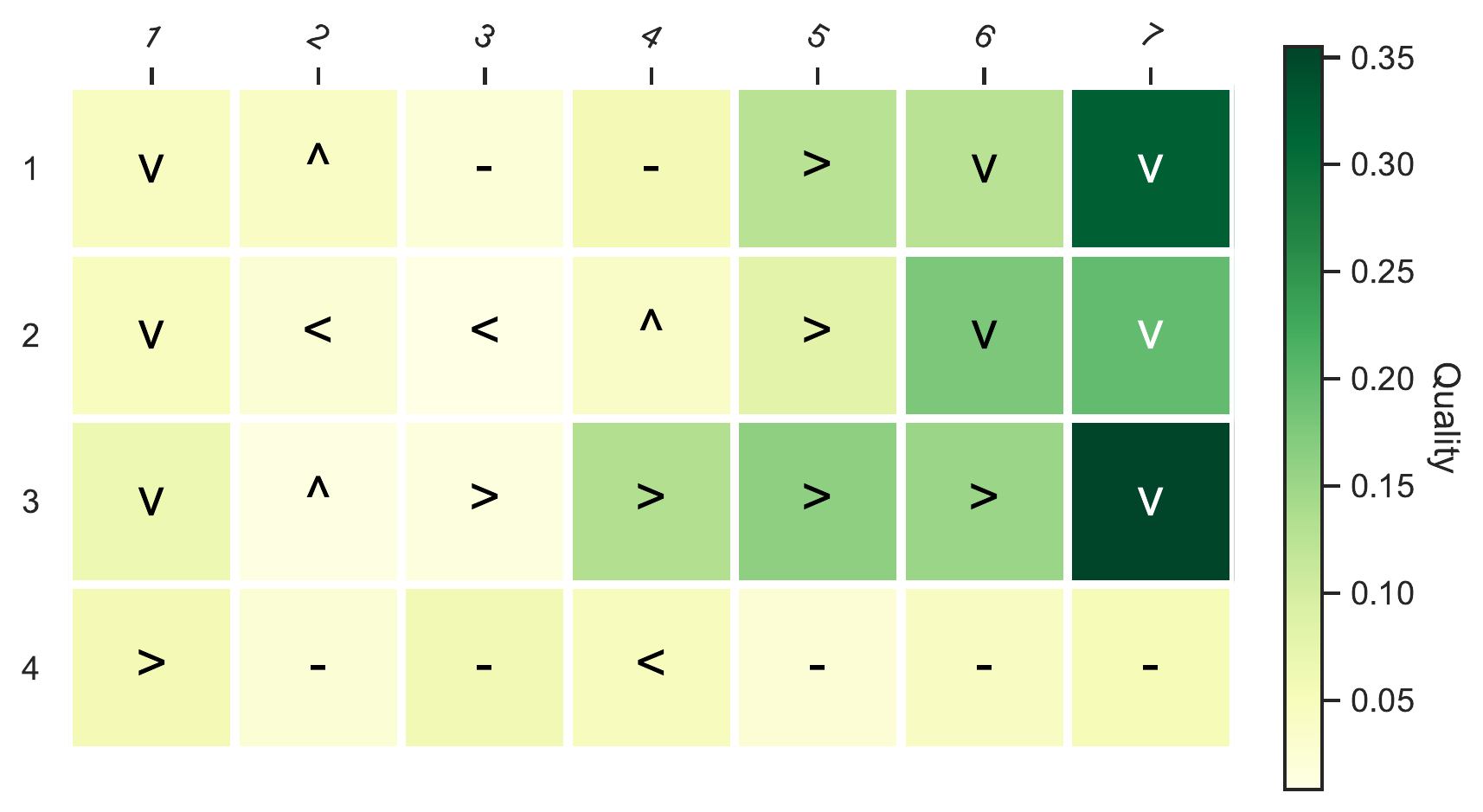}
                  \vspace{-15pt}
         \caption{Agent 3.}
         \label{fig:quality_blocker_agent3}
     \end{subfigure}
              \vspace{-8pt}
     \caption{\textbf{(a)}-\textbf{(c)}: Each of the agent's decentralized policy, i.e., $\arg \max_a Q_i(o_i, a) $, during execution on Blocker game. }
     \label{fig:quality_blocker_agents}
                   \vspace{-15pt}
\end{figure*}

\textbf{Pathological Stochastic Game}.
The optimal policy of this game is to let both agents keep acting top left  until the $10$-th step to change to bottom right, which results in the optimal  reward of $13$. 
The design of such stochastic game intends to be pathological. First, it is non-monotonic (thus QMIX surely fails), second, it demonstrates  \emph{relative overgeneralization} \cite{wei2018multiagent}  because both agents playing the 1st action on average offer a higher reward $10$ when matched with arbitrary actions from the other agent.  
We allow agent to observe the current step number and the joint action in the last time-step. Zero reward leads to immediate termination.
Fig.~\ref{fig:nmatrix_mini_lc1} shows Q-DPP can  converge to the global optimal in only $20K$ steps while  other baselines struggle.

\textbf{Blocker Game} \& \textbf{Coordinated Navigation}.
Blocker game ~\citep{heess2012actor} requires   agents to reach the bottom row by coordinating with its teammates to deceive the blockers that  can move left/right to block them. 
The  navigation game requires  four agents  to reach four different landmarks. For both tasks, it costs all  agents  $-1$ reward per  time-step before they all  reach the destination. 
Depending on the starting points, the largest reward of the game are $-3$ and $-6$ respectively. 
Both tasks are challenging in the sense that  coordination is rather challenging for  agents that only have  decentralized policies and local observations. 
Fig.~\ref{fig:spread_mini_lc1} \&~\ref{fig:blocker_mini_lc1} suggest Q-DPP still achieves the best performance. 

\textbf{Predator-Prey World}.
In this task, four predators attempt to capture two randomly-moving preys. 
Each predator can move in four directions but they only have local views.
The predators get a team reward of $1$ if two or more  predators are capturing the same prey at the same time, and they are penalized for $-0.5$ if only one of them captures a prey. The game terminates when all preys are caught.
Fig.~\ref{fig:blocker_ablation_lc1} shows Q-DPP's superior performance than all other baselines.

Apart from the best performance in terms of rewards, here we offer more insights of why and how Q-DPP works well. 

\textbf{The Importance of Assumption~\ref{assump:single}.} 
Assumption~\ref{assump:single} is the premise for the correctness of Q-DPP sampler to hold.  
To investigate its impact in practice, we conduct the ablation study on Blocker and Navigation games. 
We implement such assumption via an auxiliary loss function of $\max (0, \sigma_j^2 - \hat{\sigma}_{i, j}^2/\delta)$ that penalizes  the violation of the assumption, we set $\delta=0.5$. 
Fig.~\ref{fig:prey_mini_lc3} presents the performance comparisons of the Q-DPPs with and without such additional loss function. We can tell that 
maintaining such a condition, though not helping improve the performance, stablizes the training process by significantly reducing the variance of the rewards. 
We believe this is because violating  Assumption \ref{algo:main_algo1}  leads to over-estimating the probability of certain observation-action pairs in the partition where the violation happens, such over-estimation can make the  agent stick to a poor local observation-action pair for some time.

\textbf{The Satisfaction of Eq.~\ref{eq:igm}.} We show empirical evidence on Blocker game that the natural factorization that Q-DPP offers indeed satisfy Eq.~\ref{eq:igm}.   
Intuitively, Q-DPPs encourage agents to acquire diverse behavorial models during training so that the optimal action of one agent  does not depend on the actions of the other agents  during the decentralized execution stage,    as a result, Eq.~\ref{eq:igm} can be satisfied. 
Fig.~\ref{fig:quality_blocker_agents} (a-c) justify such intuition by showing  Q-DPP  learns mutually orthogonal behavioral models. Given  the distinction among agents' individual policies, one can tell that the joint optimum is reached through individual optima.

\textbf{Quality versus Diversity.} 
We investigate the change of the relative importance of quality versus diversity during training. 
On Blocker game, we show the ratio  of $\log \det\big(\bm{\mathcal{B}}_{Y}^{\top}\bm{\mathcal{B}}_Y\big) / \sum_{i=1}^{N}Q_{\mathcal{I}(o_i, a_i)}\big(o_i, a_i\big)$, which reflects how the learning algorithm balances maximizing reward  against encouraging diverse behaviors. 
In Fig.~\ref{fig:learned_pro}, we can see that the ratio gradually converges to $0$. The diversity term plays a less important role with the development of training; this is also expected since explorations tend to be rewarded more  at the early stage of a task.

\section{Conclusion}

We proposed Q-DPP, a new type of value-function approximator for cooperative multi-agent reinforcement learning.
Q-DPP, as a probabilistic way of modeling sets, considers not only the quality of  agents' actions  towards reward maximization, but the  diversity of agents' behaviors as well. We have demonstrated  that Q-DPP addresses the limitation of current major solutions including VDN, QMIX, and QTRAN by  learning the  value function decomposition without structural constraints. In the future, we plan to  investigate other kernel representations for Q-DPPs to tackle the  tasks with continuous states and continuous actions. 

\clearpage

\section*{Acknowledgement}

We sincerely thank Dr. Haitham Bou Ammar for his constructive comments. 

\bibliography{references}
\bibliographystyle{icml2020}

\clearpage

\appendix
\clearpage

\section{Detailed Proofs}
\setcounter{theorem}{0}
\setcounter{proposition}{0}

\subsection{Proof of Proposition 1}
\label{app:prop1}

\begin{proposition}[Volume preservation of Gram-Schmidt, see Chapter 7 in \citet{shafarevich2012linear}, also Lemma 3.1 in \citet{celis2018fair}.]
Let $\mathcal{U}_i = \operatorname{span}\{\bm{w}_1,\ldots,\bm{w}_{i-1}\}$ and $\bm{w}_i \in \mathbb{R}^P$ be the $i$-th  row of   $\bm{\mathcal{W}} \in \mathbb{R}^{M\times P}$,  then $  \prod_{i=1}^{M}\|\amalg_{\mathcal{U}_i}(\bm{w}_i)\|^2 = \det(\bm{\mathcal{W}}\bm{\mathcal{W}}^{\top})$. 
\label{prop1:gramschmidt}
\end{proposition}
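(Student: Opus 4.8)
The plan is to reduce the statement to a unit-triangular matrix factorization produced by Gram--Schmidt, after which the determinant computation is immediate. First I would observe that $\amalg_{\mathcal{U}_i}(\bm{w}_i)$ is exactly the (unnormalized) $i$-th Gram--Schmidt vector $\hat{\bm{w}}_i$ from the preliminaries: since $\mathcal{U}_i = \operatorname{span}\{\bm{w}_1,\ldots,\bm{w}_{i-1}\} = \operatorname{span}\{\hat{\bm{w}}_1,\ldots,\hat{\bm{w}}_{i-1}\}$ (with $\mathcal{U}_1=\{0\}$, so $\amalg_{\mathcal{U}_1}(\bm{w}_1)=\bm{w}_1$), projecting off $\mathcal{U}_i$ returns precisely the component of $\bm{w}_i$ orthogonal to all previous rows. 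By construction each row then decomposes as $\bm{w}_i = \hat{\bm{w}}_i + \sum_{j<i} c_{ij}\hat{\bm{w}}_j$ for suitable scalars $c_{ij}$.

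Next I would package these relations as a matrix identity. Collecting the $\hat{\bm{w}}_i$ as the rows of a matrix $\hat{\bm{\mathcal{W}}} \in \mathbb{R}^{M\times P}$, the decomposition above reads $\bm{\mathcal{W}} = T\hat{\bm{\mathcal{W}}}$, where $T\in\mathbb{R}^{M\times M}$ is unit lower-triangular (ones on the diagonal, the coefficients $c_{ij}$ strictly below it); hence $\det T = \det T^{\top} = 1$. Because the Gram--Schmidt vectors are mutually orthogonal, $\langle \hat{\bm{w}}_i,\hat{\bm{w}}_j\rangle = 0$ for $i\neq j$, so the Gram matrix $\hat{\bm{\mathcal{W}}}\hat{\bm{\mathcal{W}}}^{\top}$ is diagonal with entries $\|\hat{\bm{w}}_i\|^2$ and determinant $\prod_{i=1}^M\|\hat{\bm{w}}_i\|^2$. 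Substituting the factorization and using multiplicativity of the determinant gives
$$\det\big(\bm{\mathcal{W}}\bm{\mathcal{W}}^{\top}\big) = \det(T)\,\det\big(\hat{\bm{\mathcal{W}}}\hat{\bm{\mathcal{W}}}^{\top}\big)\,\det(T^{\top}) = \prod_{i=1}^M\|\hat{\bm{w}}_i\|^2 = \prod_{i=1}^M\|\amalg_{\mathcal{U}_i}(\bm{w}_i)\|^2,$$
which is the asserted identity.

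The step I would treat most carefully is the degenerate case in which the rows $\{\bm{w}_i\}$ are linearly dependent. There, some $\hat{\bm{w}}_i = \amalg_{\mathcal{U}_i}(\bm{w}_i) = \bm{0}$, so the right-hand product vanishes, while simultaneously $\operatorname{rank}\bm{\mathcal{W}} < M$ forces $\det(\bm{\mathcal{W}}\bm{\mathcal{W}}^{\top}) = 0$; both sides are zero and the equality persists. Reassuringly, the factorization $\bm{\mathcal{W}} = T\hat{\bm{\mathcal{W}}}$ holds verbatim even then (the offending row simply contributes a zero factor), so no genuinely separate argument is needed.

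As an alternative route that matches the Gaussian-elimination intuition promised in the appendix, I would argue by induction on $M$ using the block/Schur-complement expansion of $\det(\bm{\mathcal{W}}\bm{\mathcal{W}}^{\top})$: peeling off the last row and recognizing $\|\amalg_{\mathcal{U}_M}(\bm{w}_M)\|^2$ as the Schur complement of the leading $(M-1)\times(M-1)$ Gram block yields the \emph{base $\times$ height} recursion $\det(\bm{\mathcal{W}}\bm{\mathcal{W}}^{\top}) = \det(\bm{\mathcal{W}}_{1:M-1}\bm{\mathcal{W}}_{1:M-1}^{\top})\cdot\|\amalg_{\mathcal{U}_M}(\bm{w}_M)\|^2$, and iterating gives the product. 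I expect the triangular-factorization argument to be the cleanest to write, with the dependent-row bookkeeping being the only place demanding a moment of care.
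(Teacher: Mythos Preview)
Your argument is correct and cleaner than the paper's. The paper proves the proposition by performing Gaussian elimination directly on the Gram matrix $\bm{\mathcal{W}}\bm{\mathcal{W}}^{\top}$: it introduces an explicit single-vector orthogonalization operator, proves two small lemmas about it, carries out the row operations by hand for the $M=3$ case to exhibit an upper-triangular matrix whose diagonal entries are $\|\amalg_{\mathcal{U}_i}(\bm{w}_i)\|^2$, and then asserts that the pattern persists for general $M$. Your route instead factors $\bm{\mathcal{W}}$ itself as $T\hat{\bm{\mathcal{W}}}$ with $T$ unit lower-triangular and $\hat{\bm{\mathcal{W}}}$ having orthogonal rows, and reads off the determinant at once from $\bm{\mathcal{W}}\bm{\mathcal{W}}^{\top}=T(\hat{\bm{\mathcal{W}}}\hat{\bm{\mathcal{W}}}^{\top})T^{\top}$. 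The two arguments are dual views of the same $LDL^{\top}$/QR phenomenon, but yours handles arbitrary $M$ uniformly and avoids the paper's somewhat informal ``the $M=3$ computation generalizes'' step; you also explicitly dispose of the linearly dependent case, which the paper does not mention. Your alternative Schur-complement induction is closer in spirit to the paper's elimination argument, though the paper never phrases it that way.
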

\begin{proof}
Such property has been mentioned in linear algebra textbook, e.g.,  Chapter 7 in \citet{shafarevich2012linear}. \citet{celis2018fair} also gave out a proof by induction\footnote{We believe their proof is a special case, as  interchanging the order of  rows can  actually change the determinant value, i.e., $\det (W W^{\top}) \neq \Big[\begin{array}{l}
{w_{k}} \\
{W^{\prime}}
\end{array}\Big]\Big[\begin{array}{ll}
{w_{k}^{\top}} & {W^{\prime \top}}
\end{array}\Big]$ where the row vectors are denoted as $W = \{w_1,\ldots, w_k \}$ and $W' = \{w_1, \ldots, w_{k-1} \}$.} in  Lemma 3.1.
Here we provide our own intuition  of such property through the classical Gaussian elimination method.

We first define  an orthogonalization operator $\sqcap_{\bm{w}_i}(\bm{w}_j)$ that takes an input of a vector $\bm{w}_j \in \mathbb{R}^P$ and outputs another vector that is orthogonal to a given vector $\bm{w}_i \in \mathbb{R}^P$ by 
\begin{equation}
\sqcap_{\bm{w}_i}(\bm{w}_j):= \bm{w}_j - \bm{w}_i  \langle\bm{w}_i, \bm{w}_j\rangle/\|\bm{w}_i\|^2 \ .
\label{eq:proj_def}    
\end{equation}
Based on the Eq. \ref{eq:proj_def}, we know that $\forall \bm{w}_i,\bm{w}_j,\bm{w}_k \in \mathbb{R}^P, $ $$\sqcap_{\bm{w}_i}(\bm{w}_j + \bm{w}_k) = \sqcap_{\bm{w}_i}(\bm{w}_j) + \sqcap_{\bm{w}_i}(\bm{w}_k). $$
Besides, we have two properties for the orthogonalization operator that will be used later;  we present as lemmas.  

\begin{lemma}[Change of Projection Base] Let $\bm{w}_i$, $\bm{w}_j$, $\bm{w}_k \in \mathbb{R}^P$, we have $\bm{w}_j \cdot {\sqcap_{\bm{w}_i}(\bm{w}_k)}^{\top}  = {\sqcap_{\bm{w}_i}(\bm{w}_j)} \cdot {\sqcap_{\bm{w}_i}(\bm{w}_k)}^{\top} $. 
\label{lemma:prj}
\end{lemma}
\begin{proof}
Based the definition of Eq.~\ref{eq:proj_def}, one can easily write that the left hand side equals to the right hand side.
\end{proof}

\begin{lemma}[Subspace Orthogonalization] Let $\bm{w}_i$, $\bm{w}_j$, $\bm{w}_k \in \mathbb{R}^P$, we have ${\sqcap_{{\sqcap_{\bm{w}_i}(\bm{w}_j)} }(\bm{w}_k)\cdot {\sqcap_{\bm{w}_i}(\bm{w}_k)}^{\top}}= \big\|{\amalg_{\mathcal{U}_k}(\bm{w}_k)}\big\|^2 $ where $\mathcal{U}_k = \operatorname{span} \{\bm{w}_i, \bm{w}_j\}$. 
\label{lemma:prjsub}
\end{lemma}
\begin{proof}
The left-hand side of equation can be written by 
\begin{equation}
\begin{aligned}[b]
&{\sqcap_{{\sqcap_{\bm{w}_i}(\bm{w}_j)} }(\bm{w}_k)\cdot {\sqcap_{\bm{w}_i}(\bm{w}_k)}^{\top}}  \\ =&  \Big(\bm{w}_k - {\sqcap_{\bm{w}_i}(\bm{w}_j)} \dfrac{\langle {\sqcap_{\bm{w}_i}(\bm{w}_j)}, \bm{w}_k  \rangle} { \|{\sqcap_{\bm{w}_i}(\bm{w}_j)} \|^2)} \Big) \cdot \Big(\bm{w}_k - \bm{w}_i  \dfrac{\langle\bm{w}_i, \bm{w}_k\rangle} {\|\bm{w}_i\|^2} \Big)^{\top} \\
=& \   \bm{w}_k \bm{w}_k^{\top} - \dfrac{{\sqcap_{\bm{w}_i}(\bm{w}_j)}\cdot \bm{w}_k^{\top}}{\|{\sqcap_{\bm{w}_i}(\bm{w}_j)}\|} \langle \dfrac{{\sqcap_{\bm{w}_i}(\bm{w}_j)}}{{\|{\sqcap_{\bm{w}_i}(\bm{w}_j)}\|}}, \bm{w}_k  \rangle \\ & \ \ \ \ \ \ \ \ \ \ \ \ \ \ \ \  -  \dfrac{\bm{w}_k \cdot \bm{w}_i^{\top}}{\|\bm{w}_i\|}  \langle \dfrac{\bm{w}_i^{\top}}{\|\bm{w}_i\|}, \bm{w}_k \rangle \  \ .
\label{lemma:left}
\end{aligned}
\end{equation}
On the other hand, ${\amalg_{\mathcal{U}_k}(\bm{w}_k)}$ represents the orthogonal projection of $\bm{w}_k$ to the subspace that is spanned by $\bm{w}_i \text{ and } \bm{w}_j$. 
Since $\Big\{\frac{\bm{w}_i}{\|\bm{w}_i\|}, \frac{\sqcap_{\bm{w}_i}(\bm{w}_j)}{\|{\sqcap_{\bm{w}_i}(\bm{w}_j)} \|} \Big\}$ form a set of orthornormal basis for the subspace  $\mathcal{U}_k =  \operatorname{span} \{\bm{w}_i, \bm{w}_j\}$, according to the definition of $\amalg_{\mathcal{U}_k}(\bm{w}_k)$ in Section  \ref{sec:sample}, we can write ${\amalg_{\mathcal{U}_k}(\bm{w}_k)}$ as $\bm{w}_k$ minus the projection of $\bm{w}_k$ on the subspace that is spanned by $\Big\{\frac{\bm{w}_i}{\|\bm{w}_i\|}, \frac{\sqcap_{\bm{w}_i}(\bm{w}_j)}{\|{\sqcap_{\bm{w}_i}(\bm{w}_j)} \|} \Big\}$, i.e., 
\begin{equation}
\begin{aligned}[b]
{\amalg_{\mathcal{U}_k}(\bm{w}_k)}=\bm{w}_k \ \  - \ \  &\langle \bm{w}_k, \frac{\bm{w}_i}{\|\bm{w}_i\|} \rangle \frac{\bm{w}_i}{\|\bm{w}_i\|}   \\ \ \  - \ \ & \langle \bm{w}_k, \frac{\sqcap_{\bm{w}_i}(\bm{w}_j)}{\|{\sqcap_{\bm{w}_i}(\bm{w}_j)} \|} \rangle \frac{\sqcap_{\bm{w}_i}(\bm{w}_j)}{\|{\sqcap_{\bm{w}_i}(\bm{w}_j)} \|}. 
\label{lemma:right}
\end{aligned}
\end{equation}
Under the orthonormal property of $\frac{\bm{w}_i}{\|\bm{w}_i\|} \cdot \frac{\sqcap_{\bm{w}_i}(\bm{w}_j)}{\|{\sqcap_{\bm{w}_i}(\bm{w}_j)} \|}^{\top}=0$ and $\Big\|\frac{\bm{w}_i}{\|\bm{w}_i\|}\Big\|^2 = \Big\|\frac{\sqcap_{\bm{w}_i}(\bm{w}_j)}{\|{\sqcap_{\bm{w}_i}(\bm{w}_j)}\|}\Big\|^2 =1$, 
 finally,  squaring the Eq.~\ref{lemma:right} from both sides  leads us to the  Eq.~\ref{lemma:left}, i.e., $\big\|{\amalg_{\mathcal{U}_k}(\bm{w}_k)}\big\|^2  = {\sqcap_{{\sqcap_{\bm{w}_i}(\bm{w}_j)} }(\bm{w}_k)\cdot {\sqcap_{\bm{w}_i}(\bm{w}_k)}^{\top}}$.   
\end{proof}

Assuming $\{\bm{w}_1, \ldots, \bm{w}_M\}$ being the rows of $\bm{\mathcal{W}}$, then applying the Gram-Schmidt orthogonalization process gives  
\[
 \text{Gram-Schmidt}\Big(\big\{\bm{w}_i\big\}_{i=1}^{M}\Big) = \Big\{{\amalg_{\mathcal{U}_i}(\bm{w}_i)} \Big\}_{i=1}^{M} \] 
where $\mathcal{U}_i = \operatorname{span}\{\bm{w}_1,\ldots,\bm{w}_{i-1}\}$.
Note that we don't consider  normalizing each ${\amalg_{\mathcal{U}_i}(\bm{w}_i)}$  in this work.

In fact,  the effect on the Gram matrix determinant  $\det ({\bm{\mathcal{W}}}{\bm{\mathcal{W}}}^{\top})$ of applying the Gram-Schmidt process on the rows of $\bm{\mathcal{W}}$  is equivalent to applying  Gaussian elimination \cite{noble1988applied} to transform the Gram matrix to be upper triangular. 
Since adding a row/column of a matrix multiplied by a scalar  to another row/column of that matrix  will not change the determinant value of the original matrix \cite{noble1988applied}, Gaussian elimination, so as the Gram-Schmidt process, preserves the determinant.

To  illustrate the above equivalence, we demonstrate the Gaussian elimination process step-by-step on the case of $M=3$,  the determinant of such a Gram matrix is  
\begin{equation}
\det \big({\bm{\mathcal{W}}}{\bm{\mathcal{W}}}^{\top} \big)=\det \begin{pmatrix}
\bm{w}_1\bm{w}_1^{\top} & \bm{w}_1\bm{w}_2^{\top} &  \bm{w}_1\bm{w}_3^{\top} \\ 
\bm{w}_2\bm{w}_1^{\top} & \bm{w}_2\bm{w}_2^{\top} & \bm{w}_2\bm{w}_3^{\top}\\ 
\bm{w}_3\bm{w}_1^{\top} & \bm{w}_3\bm{w}_2^{\top} &  \bm{w}_3\bm{w}_3^{\top} 
\end{pmatrix} .
\label{eq:gram}
\end{equation}

To apply Gaussian elimination to turn the Gram matrix to be upper triangular, first, we  multiply the $1$-st row by $-\frac{\bm{w}_2\bm{w}_1^{\top}}{\bm{w}_1\bm{w}_1^{\top}}$ and then  add the result to the $2$-nd row; without affecting the determinant, we have the $2$-nd row transformed into 
\begin{equation}
\begin{aligned}[b]
&\Big[0, \bm{w}_2\bm{w}_2^{\top} -  \frac{\bm{w}_2\bm{w}_1^{\top}}{\bm{w}_1\bm{w}_1^{\top}}\bm{w}_1\bm{w}_2^{\top}, \bm{w}_2\bm{w}_3^{\top} -  \frac{\bm{w}_2\bm{w}_1^{\top}}{\bm{w}_1\bm{w}_1^{\top}}\bm{w}_1\bm{w}_3^{\top} \Big] \\
=& \Big[0, \bm{w}_2 \cdot {\sqcap_{\bm{w}_1}(\bm{w}_2)}^{\top}, \bm{w}_3 \cdot {\sqcap_{\bm{w}_1}(\bm{w}_2)}^{\top} \Big] \\
=& \Big[0, {\sqcap_{\bm{w}_1}(\bm{w}_2)} \cdot {\sqcap_{\bm{w}_1}(\bm{w}_2)}^{\top}, {\sqcap_{\bm{w}_1}(\bm{w}_3)} \cdot {\sqcap_{\bm{w}_1}(\bm{w}_2)}^{\top} \Big]. \ \ (\text{\emph{Lemma \ref{lemma:prj}}})
\label{eq:2ndrow}
\end{aligned}    
\end{equation}
Similarly, we can apply the same process on the $3$-rd row, which can be written as  
\begin{equation}
\begin{aligned}
&\Big[0, \bm{w}_3\bm{w}_2^{\top} -  \frac{\bm{w}_2\bm{w}_1^{\top}}{\bm{w}_1\bm{w}_1^{\top}}\bm{w}_1\bm{w}_2^{\top}, \bm{w}_3\bm{w}_3^{\top} -  \frac{\bm{w}_2\bm{w}_1^{\top}}{\bm{w}_1\bm{w}_1^{\top}}\bm{w}_1\bm{w}_3^{\top} \Big] \\
=& \Big[0, {\sqcap_{\bm{w}_1}(\bm{w}_2)} \cdot {\sqcap_{\bm{w}_1}(\bm{w}_3)}^{\top}, {\sqcap_{\bm{w}_1}(\bm{w}_3)} \cdot {\sqcap_{\bm{w}_1}(\bm{w}_3)}^{\top} \Big] .
\label{eq:3rdrow}
\end{aligned}    
\end{equation}
To make $\bm{\mathcal{W}}\bm{\mathcal{W}}^{\top}$ upper triangular, we need to make the $2$-nd element in the $3$-rd row be zero.
To achieve that, we multiply $-\frac{{\sqcap_{\bm{w}_1}(\bm{w}_2)} \cdot {\sqcap_{\bm{w}_1}(\bm{w}_3)}^{\top}}{{\sqcap_{\bm{w}_1}(\bm{w}_2)} \cdot {\sqcap_{\bm{w}_1}(\bm{w}_2)}^{\top}}$ to Eq. \ref{eq:2ndrow} and add the multiplication to Eq. \ref{eq:3rdrow}, and the $3$-rd row can be further transformed into 
\begin{equation}
\begin{aligned}
& \Big[0, 0,   {\sqcap_{\bm{w}_1}(\bm{w}_3)}  \cdot {\sqcap_{\bm{w}_1}(\bm{w}_3)}^{\top}  - \\  & \ \ \ \ \ \ \ \ \ \ \ \ \ \ \ \ \  \frac{{\sqcap_{\bm{w}_1}(\bm{w}_2)} \cdot {\sqcap_{\bm{w}_1}(\bm{w}_3)}^{\top}}{{\sqcap_{\bm{w}_1}(\bm{w}_2)} \cdot {\sqcap_{\bm{w}_1}(\bm{w}_2)}^{\top}} {\sqcap_{\bm{w}_1}(\bm{w}_3)} \cdot {\sqcap_{\bm{w}_1}(\bm{w}_2)}^{\top} \Big] \\
= & \Big[0, 0,  \sqcap_{\bm{w}_1}{(\bm{w}_3)} \cdot \sqcap_{\sqcap_{\bm{w}_1}(\bm{w}_2)}\big(\sqcap_{\bm{w}_1}(\bm{w}_3) \big)^{\top}\Big] \\
= & \Big[0, 0,  \sqcap_{\bm{w}_1}{(\bm{w}_3)} \cdot \sqcap_{\sqcap_{\bm{w}_1}(\bm{w}_2)}\Big(\bm{w}_3 - \bm{w}_1  \dfrac{\langle\bm{w}_1, \bm{w}_3\rangle} {\|\bm{w}_1\|^2}  \Big)^{\top}\Big] \\
= & \Big[0, 0,  \sqcap_{\bm{w}_1}{(\bm{w}_3)} \cdot \Big(\sqcap_{\sqcap_{\bm{w}_1}(\bm{w}_2)}(\bm{w}_3)  \\ & \ \ \ \ \ \ \ \ \ \ \ \ \ \ \ \ \ \ \ \ \ \ \ \ \ \ \ \ \ \ \ \ \ \ \ \ \ \ \ \ \ \ -  \sqcap_{\sqcap_{\bm{w}_1}(\bm{w}_2)} \big(\bm{w}_1  \dfrac{\langle\bm{w}_1, \bm{w}_3\rangle} {\|\bm{w}_1\|^2}\big)  \Big)^{\top}\Big] \\
= & \Big[0, 0,  \sqcap_{\bm{w}_1}{(\bm{w}_3)} \cdot \sqcap_{\sqcap_{\bm{w}_1}(\bm{w}_2)}\big(\bm{w}_3\big)^{\top}\Big]   \\
= & \Big[0, 0,  \big\|\amalg_{\mathcal{U}_3}(\bm{w}_3) \big\|^2 \Big]. \ \ \ \ \ \ \ \ \ \ \ \   (\text{\emph{Lemma  \ref{lemma:prjsub}}})
\label{eq:new3rdrow}
\end{aligned}    
\end{equation}
In the fourth equation of Eq.\ref{eq:new3rdrow}, we use the property that $\sqcap_{\bm{w}_1}(\cdot)\cdot \sqcap_{\sqcap_{\bm{w}_1}(\cdot)} (\bm{w}_1)^{\top} =0 $, i.e., the inner product between a vector and its own   orthogonalization equals to zero. 

Given the Gran matrix is now upper triangular, by 
putting Eq. \ref{eq:2ndrow} and Eq. \ref{eq:new3rdrow} into Eq. \ref{eq:gram}, and define $\mathcal{U}_1 = \emptyset, \mathcal{U}_2=\{ \bm{w}_1\}, \mathcal{U}_3=\{ \bm{w}_1, \bm{w}_2\}$,  we can write the determinant  to be \begin{equation}
\begin{aligned}
&\det \big({\bm{\mathcal{W}}}{\bm{\mathcal{W}}}^{\top} \big) \\ 
= \det & \begin{pmatrix}
\bm{w}_1\bm{w}_1^{\top} & \bm{w}_1\bm{w}_2^{\top} &  \bm{w}_1\bm{w}_3^{\top} \\ 
0 & \big\|{\sqcap_{\bm{w}_1}(\bm{w}_2)}\big\|^2 & {\sqcap_{\bm{w}_1}(\bm{w}_3)} \cdot {\sqcap_{\bm{w}_1}(\bm{w}_2)}^{\top}\\ 
0 & 0 &   \big\|\amalg_{\mathcal{U}_3}(\bm{w}_3) \big\|^2
\end{pmatrix}  \\
=  \prod_{i=1}^{3}&\Big\|\amalg_{\mathcal{U}_i}(\bm{w}_i)\Big\|^2 \ \ \ . \nonumber
\end{aligned} 
\end{equation}

When $M \ge 3$, the consequence of 
eliminating all $j$-th elements ($j<i$) in the $i$-th row of the Gram matrix  $\bm{\mathcal{W}}\bm{\mathcal{W}}^{\top}_{(i,j)}$ by   Gaussian elimination   is equivalent to the $i$-th step of the Gran-Schmidt process applied  on the  vector set $\{\bm{w}_i\}_{i=1}^{M}$, in other words,  the $(i, i)$-th element of the Gram matrix after Gaussian elimination is essentially the squared norm of $\amalg_{\mathcal{U}_i}(\bm{w}_i)$. 
Finally, since the determinant of an upper-triangular matrix is simply the multiplication of its diagonal elements, we have  $\prod_{i=1}^{M}\big\|\amalg_{\mathcal{U}_i}(\bm{w}_i)\big\|^2.$
\end{proof}

\newpage

\subsection{Proof of Theorem 1}

\begin{theorem}[Approximation Guarantee of Orthogonalizing Sampler]
For a Q-DPP defined in Definition \ref{def:dpp}, under Assumption \ref{assump:single}, the Orthogonalizing Sampler described in Algorithm \ref{algo:main_algo1} returns a sampled subset $Y \in \mathcal{C}(\bm{o})$ with probability $\mathbb{P}(Y) \le  1/\delta^N \cdot \tilde{\mathbb{P}}(\bm{Y}=Y)$ where $N$ is the number of agents, $\tilde{\mathbb{P}}$ is defined in Eq. \ref{def:q-dpp}, $\delta$ is defined in Assumption \ref{assump:single}. 
\end{theorem}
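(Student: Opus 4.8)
The plan is to compute the sampler's output probability exactly and then bound its normalizer against the true Q-DPP normalizer. First I would fix a target transversal $Y=\{(o_i,a_i)\}_{i=1}^N$ with $y_i$ its element in partition $i$, and note that the sequential sampler reaches $Y$ along a single deterministic path, so $\mathbb{P}(Y)=\prod_{i=1}^N q^{(i)}(y_i)/Z_i(Y)$, where at step $i$ every candidate weight is taken \emph{after} Gram--Schmidt orthogonalization against the span $V_i:=\operatorname{span}\{\bm{w}_{y_1},\dots,\bm{w}_{y_{i-1}}\}$ of the already-selected rows. Since orthogonal projection is linear and scale-invariant and $\bm{w}_y=d_y\bm{b}_y$, the chosen weight is $q^{(i)}(y_i)=\|\amalg_{V_i}(\bm{w}_{y_i})\|^2$, so by Proposition~\ref{prop1:gramschmidt} the numerator telescopes exactly: $\prod_{i=1}^N\|\amalg_{V_i}(\bm{w}_{y_i})\|^2=\det(\bm{\mathcal{W}}_Y\bm{\mathcal{W}}_Y^{\top})=\det(\bm{\mathcal{L}}_Y)$. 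Writing $\tilde{\mathbb{P}}(Y)=\det(\bm{\mathcal{L}}_Y)/\sum_{Y'\in\mathcal{C}(\bm{o})}\det(\bm{\mathcal{L}}_{Y'})$ and cancelling the common factor $\det(\bm{\mathcal{L}}_Y)$, the whole theorem reduces to the single inequality $\prod_{i=1}^N Z_i(Y)\ge\delta^N\sum_{Y'\in\mathcal{C}(\bm{o})}\det(\bm{\mathcal{L}}_{Y'})$.

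Next I would identify each step normalizer as a projected trace. Letting $M_i:=\sum_{y\in\mathcal{Y}_i(o_i)}\bm{w}_y\bm{w}_y^{\top}$, one has $Z_i(Y)=\sum_{y}\|\amalg_{V_i}(\bm{w}_y)\|^2=\tr(\amalg_{V_i}\,M_i)$, where $\amalg_{V_i}$ is read as the projection matrix onto $V_i^{\perp}$, a subspace of dimension at least $P-(i-1)$. The eigenvalues of $M_i$ are the squared singular values $\hat{\sigma}_{i,1}^2\ge\cdots\ge\hat{\sigma}_{i,P}^2$ of the observed partition submatrix, so by the Ky~Fan / Cauchy interlacing bound the trace of $M_i$ through any projection onto a space of that dimension is at least the sum of its $P-i+1$ smallest eigenvalues, giving the \emph{path-independent} lower bound $Z_i(Y)\ge\sum_{j=i}^P\hat{\sigma}_{i,j}^2$. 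This is the key move: it removes all dependence on the realized path $Y$. Invoking Assumption~\ref{assump:single} in the form $\hat{\sigma}_{i,j}^2\ge\delta\,\sigma_j^2$ termwise then yields $\prod_{i=1}^N Z_i(Y)\ge\delta^N\prod_{i=1}^N\sum_{j=i}^P\sigma_j^2$, with $\sigma_j$ the singular values of the observed design matrix $\bm{\mathcal{W}}^{(\bm{o})}$.

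Finally I would bound the true normalizer from above by the same quantity. Since every transversal is a size-$N$ subset and all principal minors are nonnegative, $\sum_{Y'\in\mathcal{C}(\bm{o})}\det(\bm{\mathcal{L}}_{Y'})\le\sum_{|S|=N}\det(\bm{\mathcal{L}}_S)=e_N(\sigma_1^2,\dots,\sigma_P^2)$, the $N$-th elementary symmetric polynomial of the eigenvalues of $\bm{\mathcal{L}}$ (restricted to the observed rows). Expanding $\prod_{i=1}^N(\sum_{j\ge i}\sigma_j^2)=\sum_{k_1\ge1,\dots,k_N\ge N}\sigma_{k_1}^2\cdots\sigma_{k_N}^2$ and observing that every strictly increasing tuple $j_1<\cdots<j_N$ satisfies $j_i\ge i$, each monomial of $e_N$ appears among the nonnegative terms of the expansion, so $e_N(\sigma_1^2,\dots,\sigma_P^2)\le\prod_{i=1}^N\sum_{j=i}^P\sigma_j^2$. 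Chaining the three displays gives $\prod_i Z_i(Y)\ge\delta^N\sum_{Y'}\det(\bm{\mathcal{L}}_{Y'})$ and hence $\mathbb{P}(Y)\le\delta^{-N}\tilde{\mathbb{P}}(Y)$.

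I expect the main obstacle to be the middle paragraph: the step normalizers $Z_i(Y)$ genuinely depend on the random path taken so far through $V_i$, and a naive term-by-term comparison with $\sum_{Y'}\det$ is too lossy. The trick that makes it work is to stop tracking $V_i$ exactly and instead use the uniform eigenvalue lower bound (the smallest $P-i+1$ squared singular values), which holds for every subspace of the right dimension; only then does Assumption~\ref{assump:single} convert per-partition singular values into the clean $\delta^N$ factor. A secondary point to handle carefully is that the singular values in Assumption~\ref{assump:single} must be read on the submatrices restricted to the observed rows $\mathcal{Y}_i(o_i)$, so that they match the $M_i$ and the global $\bm{\mathcal{W}}^{(\bm{o})}$ actually appearing in the two normalizers.
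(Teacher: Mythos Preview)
Your proposal is correct and follows essentially the same route as the paper's proof: both express the sampler's numerator as $\det(\bm{\mathcal{L}}_Y)$ via Proposition~\ref{prop1:gramschmidt}, lower-bound each step normalizer by the tail sum $\sum_{j\ge i}\hat{\sigma}_{i,j}^2$, apply Assumption~\ref{assump:single} to pull out $\delta^N$, and then compare $\prod_i\sum_{j\ge i}\sigma_j^2$ to the elementary symmetric polynomial $e_N(\sigma^2)$ which dominates $\sum_{Y'\in\mathcal{C}(\bm{o})}\det(\bm{\mathcal{L}}_{Y'})$. The only cosmetic differences are that the paper phrases the per-step lower bound via the Eckart--Young--Mirsky theorem (writing $Z_i=\|\bm{\mathcal{W}}_{\mathcal{Y}_i}-\bm{\mathcal{W}}'\|_F^2$ with $\operatorname{rank}(\bm{\mathcal{W}}')=i-1$) rather than your equivalent Ky~Fan/interlacing trace argument, and it invokes the Cauchy--Binet identity as a cited lemma of \citet{deshpande2006matrix} rather than naming $e_N$ directly; your monomial-inclusion justification of $\prod_i\sum_{j\ge i}\sigma_j^2\ge e_N$ is in fact more explicit than the paper, which asserts that step without argument. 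Your closing caveat about reading the singular values on the observed slices $\mathcal{Y}_i(o_i)$ is well taken and is glossed over in the paper's statement of Assumption~\ref{assump:single}.
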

\begin{proof}
 
  This result can be regarded as a special case of Theorem 3.2 in \citet{celis2018fair} when the number of sample from each partition in $P$-DPP is set to one (please find \emph{Appendix \ref{sec:kdpp_qdpp}} for the differences between $P$-DPP and Q-DPP).  
  
Sine our sampling algorithm generates samples with the probability in proportional to the determinant value $\det (\bm{\mathcal{L}}_Y)$, which is also the nominator in Eq. \ref{def:q-dpp},   
it is then necessary to bound the denominator of the probability of samples from our proposed sampler  so that the error to the exact denominator defined in   Eq. \ref{def:q-dpp} can be controlled.
We start from the Lemma that is going to be used. 
	
	\begin{lemma}[Eckart-Young-Mirsky Theorem]
	\label{lemma:svd}
	For a real matrix $\bm{\mathcal{W}} \in \mathbb{R}^{M \times P}$ with $M \ge P$, suppose that $\bm{\mathcal{W}} = \bm{U} \Sigma \bm{V}^{\top}$ is the singular value decomposition (SVD) of  $\bm{\mathcal{W}}$,  then the best rank $k$ approximation to $\bm{\mathcal{W}}$ under the Frobenius norm $\|\cdot\|_F$ described as  
	$$
	\min_{\bm{\mathcal{W}}': \operatorname{rank}(\bm{\mathcal{W}}') = k} \big\| \bm{\mathcal{W}} - \bm{\mathcal{W}}' \big\|^2_F  
	$$
	is given by  $\bm{{\mathcal{W}}}' = \bm{{\mathcal{W}}}^k=\sum_{i=1}^{k}\sigma _{i}\bm{u}_{i}\bm{v}_{i}^{\top}$ where  $\bm{u}_{i}$ and $\bm{v}_{i}$ denote the $i$-th column of $\bm{U}$ and $\bm{V}$ respectively, and, 
	$${\big \|\bm{\mathcal{W}}-\bm{\mathcal{W}}^k \big\|_{F}^{2}=\Big\|\sum _{i=k+1}^{P}\sigma _{i}\bm{u}_{i}\bm{v}_{i}^{\top }\Big\|_{F}^{2}=\sum _{i=k+1}^{P}\sigma _{i}^{2}}. $$
	Note that the singular values $\sigma_i$ in $\Sigma$ is ranked by size by the SVD procedures such that $\sigma_1 \ge \ldots \ge \sigma_P.$  $\qed$
	\end{lemma}
	
	\begin{lemma}[Lemma 3.1 in \cite{deshpande2006matrix}]
	\label{lemma:svd-k}
	For a matrix $\bm{\mathcal{W}} \in \mathbb{R}^{M \times P}$ with $M \ge P \ge N$, assume $\{\sigma_i\}_{i=1}^{P}$ are the singular values of $\bm{\mathcal{W}}$ and $\bm{\mathcal{W}}_Y$ is the submatrix of $\bm{\mathcal{W}}$ with rows indexed by the elements in  $Y$, then we have 
	$$ \sum_{ |Y|=N} \det \big(\bm{\mathcal{W}}_Y \bm{\mathcal{W}}_Y^{\top}\big) = \sum_{k_1 < \cdots < k_N} \sigma_{k_1}^2 \cdots \sigma_{k_N}^2. \qed$$ 
	\end{lemma}

To stay consistency on notations, we use $N$ for number of agents, $M$ for the size of ground set of Q-DPP, $P$ is the dimension of diverse feature vectors, we assume $M \ge P \ge N$. 
Let $\bm{Y}$ be the random variable representing the output of our proposed sampler in Algorithm \ref{algo:main_algo1}. 
Since the algorithm visit each partition in Q-DPP sequentially, a sample $\tilde{Y}=\big\{(o_1^1, a_1^1),\dots,(o_N^{|\mathcal{O}|}, a_N^{|\mathcal{A}|}) \big\}$  is therefore an ordered set. 
Note that the algorithm is agnostic to the partition number (i.e. the agent identity), without losing generality, we denote the first partition chosen as  $\mathcal{Y}_1$.
We further denote $\tilde{Y}_i, i \in \{1,\ldots,N\}$ as the $i$-th observation-state pair in $\tilde{Y}$, and $\mathcal{I}(\tilde{Y}_i) \in \{1,\ldots,N\}$ denotes the partition number where $i$-th  pair is sampled.

According to the Algorithm \ref{algo:main_algo1}, at first step, we choose $\mathcal{Y}_1$, and based on the corresponding observation $o_1$, we then locate the valid subsets $ \forall (o, a) \in  \mathcal{Y}_i(o_i)$, and finally sample one observation-action pair from the valid set  $\mathcal{Y}_i(o_i)$ with probability proportional to  the norm of the vector defined in the Line $4-5$ in Algorithm \ref{algo:main_algo1}, that is, 
\begin{equation}
 \mathbb{P}(\tilde{Y}_i) \propto	\big\|\bm{w}_{\mathcal{J}(o, a)}\big\|^2= \big\|\bs{b}_{\mathcal{J}(o, a)}\big\|^{2}\exp{\big(\bm{\mathcal{D}}_{\mathcal{J}(o, a), \mathcal{J}(o, a)} \big)}.  
\end{equation}
After $\tilde{Y}_i$ is sampled, the algorithm then moves to the next partition and repeat the same process until all $N$ partitions are covered. 

The specialty of this sampler is that before sampling at each partition $i \in \{1,\ldots,N\}$,  the Gram-Schmidt process will be applied to ensure all the rows in the $i$-th partition of $\bm{\mathcal{W}}$ to be orthogonal to all previous sampled pairs
$$
\bs{b}^i_j=\amalg_{\operatorname{span}\{B^i\}}\left(\bs{b}^{i-1}_j\right), \forall j \in \{1,...,M\} - J .
$$
where  $B^i=\{ \bs{b}^t_{\mathcal{J}(o^t, a^t)}\}_{t=1}^{i-1}, J=\{\mathcal{J}(o^t, a^t) \}_{t=1}^{i-1}$.
Note that since $\bm{\mathcal{D}}$ only contributes a scalar to $\bm{w}_j$,  and $\bs{b}_j$ is a $P$-dimensional vector same as $\bm{w}_j$, in practice, the Gram-Schmidt orthorgonalization  needs only conducting on $\bs{b}_j$ in order to make rows of $\bm{\mathcal{W}}$ mutually orthogonal.

Based on the above sampling process and each time-step $i$, we can write the probability of getting a  sample $\tilde{Y}$ by 
\begin{equation}
\begin{aligned}[b]
\label{eq:sampleprob}
&	\mathbb{P}(\bm{Y}=\tilde{Y}) \\ 
= & \  \mathbb{P}\big(\tilde{Y}_1 \big) \prod_{i=2}^{N} \mathbb{P}\big(\tilde{Y}_i \big| \tilde{Y}_{1},\dots,\tilde{Y}_{i-1}  \big)  \\
=&\prod_{i=1}^{N} \dfrac{\Big\|\amalg_{\operatorname{span}\{B^i\}}\left(\bm{w}_{\mathcal{I}(\tilde{Y}_i)}\right)\Big\|^{2}}{\sum_{(o, a) \in \mathcal{Y}_{\mathcal{I} \left(\tilde{Y}_{i}\right)}}\Big\|\amalg_{\operatorname{span}\{B^i\}}\left(\bm{w}_{\mathcal{I}(o, a)}\right)\Big\|^{2}} \\
=& \dfrac{\prod_{i=1}^{N}\Big(\Big\|\amalg_{\operatorname{span}\{B^i\}}\left(\bm{w}_{\mathcal{I}(\tilde{Y}_i)}\right)\Big\|^{2} \Big)}{\prod_{i=1}^{N}\Big( \sum_{(o, a) \in \mathcal{Y}_{\mathcal{I} \left(\tilde{Y}_{i}\right)}}\Big\|\amalg_{\operatorname{span}\{B^i\}}\left(\bm{w}_{\mathcal{I}(o, a)}\right)\Big\|^{2}\Big)} \\
=& \dfrac{\det \big(\bm{\mathcal{W}}_{\tilde{Y}}\bm{\mathcal{W}}_{\tilde{Y}}^{\top}\big)}{\prod_{i=1}^{N}\Big( \sum_{(o, a) \in \mathcal{Y}_{\mathcal{I} \left(\tilde{Y}_{i}\right)}}\Big\|\amalg_{\operatorname{span}\{B^i\}}\left(\bm{w}_{\mathcal{I}(o, a)}\right)\Big\|^{2}\Big)} \ \ \ , 
	\end{aligned}
\end{equation} 
where the $4$-th equation in Eq.~\ref{eq:sampleprob} is valid because of Proposition \ref{prop1:gramschmidt}. 

For each term in the denominator, according to the definition of the operator $\amalg_{\operatorname{span}\{B^i\}}$,  we can rewrite into
{\small  
\begin{equation}
\begin{aligned}[b]
\sum_{(o, a) \in \mathcal{Y}_{\mathcal{I} \left(\tilde{Y}_{i}\right)}}\Big\|\amalg_{\operatorname{span}\{B^i\}}\left(\bm{w}_{\mathcal{I}(o, a)}\right)\Big\|^{2} = \Big \| \bm{\mathcal{W}}_{{\mathcal{I} (\tilde{Y}_{i})}} - \bm{\mathcal{W}}_{{\mathcal{I} (\tilde{Y}_{i})}}' \Big \|_{F}^2 \nonumber
	\end{aligned}
\end{equation} }
where the rows of $\bm{\mathcal{W}}_{{\mathcal{I} (\tilde{Y}_{i})}}$ are $\{\bm{w}_{\mathcal{I}(o, a)} \}_{(o, a) \in \mathcal{Y}_{\mathcal{I}(\tilde{Y}_i)}}$ which are essentially the submatrix of $\bm{\mathcal{W}}$ that corresponds to partition $\mathcal{I}(\tilde{Y}_i)$, and the rows of $\bm{\mathcal{W}}_{{\mathcal{I} (\tilde{Y}_{i})}}^{'}$ are the orthogonal projections of $\{\bm{w}_{\mathcal{I}(o, a)} \}_{(o, a) \in \mathcal{Y}_{\mathcal{I}}}$ onto $\operatorname{span}\{B^i\}$, and we know $\operatorname{rank}\big(\bm{\mathcal{W}}_{{\mathcal{I} (\tilde{Y}_{i})}}^{'}\big)=|B^i| = i-1$. 
According to Lemma \ref{lemma:svd},  with $\hat{\sigma}_{{\mathcal{I} (\tilde{Y}_{i})}, k}$ being the $k$-th singular value of 
$\bm{\mathcal{W}}_{{\mathcal{I} (\tilde{Y}_{i})}}$, we know that
\begin{equation}
\begin{aligned}[b]
 \Big \| \bm{\mathcal{W}}_{{\mathcal{I} (\tilde{Y}_{i})}} - \bm{\mathcal{W}}_{{\mathcal{I} (\tilde{Y}_{i})}}' \Big \|_{F}^2 \ge \sum_{k=i}^{P} \hat{\sigma}_{{\mathcal{I} (\tilde{Y}_{i})}, k}^2  \ \ .
	\end{aligned}
\end{equation} 

 Therefore, we have the denominator of Eq.~\ref{eq:sampleprob} as:
 \begin{equation}
\begin{aligned}[b]
\label{eq:normalizer}
&\prod_{i=1}^{N}\Big( \sum_{(o, a) \in \mathcal{Y}_{\mathcal{I} \left(\tilde{Y}_{i}\right)}}\Big\|\amalg_{\operatorname{span}\{B^i\}}\left(\bm{w}_{\mathcal{I}(o, a)}\right)\Big\|^{2}\Big)	\\
 \ge & \ \  \prod_{i=1}^{N}\sum_{k=i}^{P} \hat{\sigma}_{{\mathcal{I} (\tilde{Y}_{i})}, k}^2 \ \ 
 \ge \ \  \prod_{i=1}^{N}\sum_{k=i}^{P} \delta \cdot \sigma_k^2  \ \ \ \ \ \ \ \ \  (\text{\emph{Assumption \ref{assump:single}}}) \\
 \ge & \ \ \ \delta^N \cdot \hspace{-5pt} \sum_{k_1 < \cdots < k_N} \sigma_{k_1}^2 \cdots \sigma_{k_N}^2 \\
 = & \  \delta^N \cdot  \sum_{Y \subseteq \mathcal{Y}:  |Y|=N} \det \big(\bm{\mathcal{W}}_Y \bm{\mathcal{W}}_Y^{\top}\big)  \ \ \ \ \ \ \ \ \ \ \ \ \ \ \   (\text{\emph{Lemma \ref{lemma:svd-k}}})   \\
 \ge & \ \ \ \delta^N \cdot  \sum_{Y \in \mathcal{C}(\bm{o})} \det \big(\bm{\mathcal{W}}_Y \bm{\mathcal{W}}_Y^{\top}\big) 
\end{aligned}
\end{equation} 
Taking Eq.~\ref{eq:normalizer} into Eq.~\ref{eq:sampleprob}, we can obtain that 
$$
\mathbb{P}(\bm{Y}=\tilde{Y}) \le  \dfrac{\delta^N \cdot \det \big(\bm{\mathcal{W}}_{\tilde{Y}}\bm{\mathcal{W}}_{\tilde{Y}}^{\top}\big)}{ \sum_{Y \in \mathcal{C}(\bm{o})} \det \big(\bm{\mathcal{W}}_Y \bm{\mathcal{W}}_Y^{\top}\big) } =  1/\delta^N  \cdot \tilde{\mathbb{P}}(\bm{Y}=\tilde{Y})
$$
 where $\mathbb{P}(\bm{Y}=\tilde{Y})$ is the probability of obtaining the sample $\tilde{Y}$ from our proposed sampler and $\tilde{\mathbb{P}}(\bm{Y}=\tilde{Y})$ is the probability of getting that sample $\tilde{Y}$ under Eq. \ref{def:q-dpp}.
 	\end{proof}

\clearpage
\onecolumn

\subsection{Difference between Q-DPP and $P$-DPP}
\label{sec:kdpp_qdpp}
The design of Q-DPP and its samply-by-projection sampling process is inspired by and based on $P$-DPP \cite{celis2018fair}. 
 However, we would like to highlight the multiple differences in that \textbf{1)} $P$-DPP is designed for modeling the fairness for data summarization whereas Q-DPP serves as a function approximator for the joint Q-function in the context of multi-agent learning; \textbf{2)} though we analyze Eq. \ref{eq:sampleprob} based on $\bs{\mathcal{W}}$, the actual orthorgonalziation step of our sampler only needs performing on the vectors of $\bs{b}_j$ rather than the entire matrix $\bs{\mathcal{W}}$ due to our unique quality-diversity decomposition on the joint Q-function in Eq. \ref{eq:qd_decomp}; \textbf{3)} the set of  elements in each partition $\mathcal{Y}_i(o_i)$ of Q-DPP change with the observation at each time-step, while the partitions stay fixed in the case of $P$-DPP;  \textbf{4)} the parameters of  $\bs{\mathcal{W}}$ are learned through a trail-and-error multi-agent reinforcement learning process compared to the cases in $P$-DPP  where the kernel is given by hand-crafted features (e.g. SIFT features on images); \textbf{5)} we implement the constraint in Assumption \ref{assump:single} via a penalty term during the CTDE  learning  process, while $P$-DPP does not consider meeting such assumption through optimization.

\subsection{Time Complexity of Algorithm \ref{algo:main_algo1}}
\label{timecomp}
Let’s analyze the time complexity of the proposed Q-DPP sampler in steps $1-10$ of   Algorithm  \ref{algo:main_algo1}. Given the observation $o$, and the input matrices $\bm{\mathcal{D}}, \bm{\mathcal{B}}$ (whose sizes are $M \times M$, $P \times M$, with $M=|A| \times N$ being the size of all $N$ agents' allowed actions under $o$ and $P$ being the diversity feature dimension), the sampler samples one action for each agent sequentially, so the outer loop of step 3 is $\mathcal{O}(N)$. Within the partition of each agent, step 4 is $\mathcal{O}(P)$, step 5 is $\mathcal{O}(P|A|)$, step 6 is $\mathcal{O}(1)$, so the complexity so far is $\mathcal{O}(NP|A|)$. Computing step 8 for ALL partitions is of $\mathcal{O}(N^2P|A|) ^{\dagger}$. The overall complexity is $\mathcal{O}(N^2P|A|)=\mathcal{O}(NMP)$, since the input is $\mathcal{O}(MP)$ and the agent  number  $N$ is a constant, our sampler has linear-time complexity with respect to the input, also linear-time with respect to the number of agents. Such argument is in line with 
the project-and-sample sampler in \citet{celis2018fair}. 

$^{\dagger}$: In the Gram-Schmidt process, orthogonalizing a vector to another takes $\mathcal{O}(P)$. Considering all valid actions for each agent takes $\mathcal{O}(P|A|)$. Note that while looping over different partitions, the remaining unsampled partitions do not need repeatedly orthogonalizing to all the previous samples, in fact, they only need orthogonalizing to the LATEST sample. In the example of Fig 2, after agent 2 selects action 5, agent 3’s three actions only need orthogonalizing to action 5 but not action 2 because it has been performed when the partition of agent 1 was visited. So the total number of orthogonalization is $(N-1)N/2$ across all partitions, leading to $\mathcal{O}(N^2P|A|)$ time for step 8.

\section{Experimental Parameter Settings}
\label{exp_detail}
\label{sec:qdpp_settings}
The hyper-parameters settings for Q-DPP are given in Table~\ref{tb:hyper}. 
For all experiments we update the target networks after every 100 episodes. All activation functions in hidden layers are ReLU.  The
optimization is conducted using RMSprop with a learning rate of $5 \times 10^{-4}$ and $\alpha=0.99$ with no weight decay or momentum.
\begin{table*}[h]
\vspace{-5pt}
\caption{Q-DPP Hyper-parameter Settings.}
\label{tb:hyper}
\vspace{-13pt}
\begin{center}
\begin{small}
\begin{sc}
\begin{tabular}{lcl}
\toprule
\hspace{15pt}\textbf{Common Settings }& \textbf{Value} & \textbf{Description}  \\
\midrule
Learning Rate& $0.0005$ & Optimizer learning rate.\\
Batch Size &  $32$ & Number of episodes to use for each update. \\
Gamma & $0.99$ &  Long Term Discount factor.\\
Hidden Dimension & $64$ & Size of hidden states. \\
Number of Hidden Layers & $3$ & Number of Hidden layers. \\
target update interval & $100$ & interval of updating the target network. \\ \hline
\multicolumn{3}{l}{\textbf{Multi-Step Matrix Game}}  \\  \hline
Step    & 40K& Maximum time steps.\\
Feature Matrix Size & $176 \times 32$ & Number of observation-action pair times embedding size. \\
Individual Policy Type & RNN & Recurrent DQN. \\
epsilon decay scheme & \multicolumn{2}{l}{linear decay from $1$ to $0.05$ in  $30$K steps.}\\ \hline
\multicolumn{3}{l}{\textbf{Coordinated Navigation}}  \\  \hline
Step    & 100K& Maximum time steps.\\
Feature Matrix Size & $720 \times 32$ & Number of observation-action pair times embedding size. \\
Individual Policy Type & RNN & Recurrent DQN. \\
epsilon decay scheme & \multicolumn{2}{l}{linear decay from $1$ to $0.1$ in  $10$K steps.}\\
\hline
\multicolumn{3}{l}{\textbf{Blocker Game}}  \\  \hline
Step    & 200K& Maximum time steps.\\
Feature Matrix Size & $420 \times 32$ & Number of observation-action pair times embedding size. \\
Individual Policy Type & RNN & Recurrent DQN. \\
epsilon decay scheme & \multicolumn{2}{l}{linear decay from $1$ to $0.01$ in  $100$K steps.}\\
\hline
\multicolumn{3}{l}{\textbf{Predator-Prey World} (Four Predators, Two Preys)}  \\  \hline
Step    & 4M& Maximum time steps.\\
Feature Matrix Size & $3920 \times 32$ & Number of observation-action pair times embedding size. \\
Individual Policy Type & Feedforward & Feedforward DQN. \\
epsilon decay scheme & \multicolumn{2}{l}{linear decay from $1$ to $0.1$ in  $300$K steps.}\\
\bottomrule
\end{tabular}
\end{sc}
\end{small}
\end{center}
\vspace{-10pt}
\end{table*}

If not particularly indicated, all the baselines use common settings as listed in Section~\ref{sec:qdpp_settings}.
 IQL, VDN, QMIX, MAVEN and QTRAN use common individual action-value networks as those used by Q-DPP; each consists of two 32-width hidden layers. 
The specialized parameter settings for each algorithm are provided in Table~\ref{tb:hyper_baselines}:
\begin{table*}[h]
\caption{Hyper-parameter Settings for Baseline Algorithms.}
\label{tb:hyper_baselines}
\vspace{-13pt}
\begin{center}
\begin{small}
\begin{sc}
\begin{tabular}{lcl}
\toprule
\hspace{15pt}\textbf{Settings }& \textbf{Value} & \textbf{Description}  \\
\midrule 
\multicolumn{3}{l}{\textbf{QMIX}}  \\  \hline
Monotone Network Layer    & 2& Layer number of Monotone Network.\\
Monotone Network Size & 64 & Hidden layer size  of Monotone Network. \\
\hline
\multicolumn{3}{l}{\textbf{QTRAN}}  \\  \hline
joint action-value network layer    & 2& Layer number of joint action-value network.\\
joint action-value network Size & 64 & Hidden layer size of joint action-value network. \\
\hline
\multicolumn{3}{l}{\textbf{MAVEN}}  \\  \hline
$z$    & 2 & Noise dimension.\\
$\lambda_{MI}$ & $0.001$ & weight of MI objective. \\
$\lambda_{QL}$ & 1 & weight of QL objective. \\
entropy regularization & $0.001$ & Feedforward DQN. \\
discriminator layer & 1 & Number of discriminator network layer. \\
discriminator size & 32 & Hidden Layer Size of discriminator network. \\
\bottomrule
\end{tabular}
\end{sc}
\end{small}
\end{center}
\end{table*}

\clearpage

\section{Solution for Continuous States: Deep Q-DPP}
\label{deep-qdpp}
Although our proposed Q-DPP serves as a new type of function approximator for the value function in multi-agent reinforcement learning, deep neural networks can also be  seamlessly  applied  on Q-DPP. 
Specifically, one can adopt deep networks to respectively represent the quality and diversity terms in the kernels of Q-DPP to tackle continuous state-action space, and we name such approach Deep Q-DPP.  
In Fig. \ref{fig:dpp}, one can think of Deep Q-DPP as modeling $\bm{\mathcal{D}}$ and $\bm{\mathcal{B}}$ by  neural networks rather than look-up tables. 
An analogy of  Deep Q-DPP to Q-DPP would be Deep Q-learning \cite{mnih2015human} to  Q-learning \cite{watkins1992q}.
As the main motivation of introducing Q-DPP is to eliminate structural constraints and  bespoke neural architecture designs in solving  multi-agent cooperative tasks,  
 we omit  the study of Deep Q-DPP in the main body of this paper. 
Here we  demonstrate a proof of concept for Deep Q-DPP and its effectiveness on StarCraft II micro-management tasks \cite{samvelyan2019starcraft} as an initiative. 
However, we do believe a full treatment needs substantial future work. 

\subsection{Neural Architectures for Deep Q-DPP.}
\begin{figure}[h]
	\centering
	\vspace{0pt}
	\includegraphics[width=1.0\textwidth]{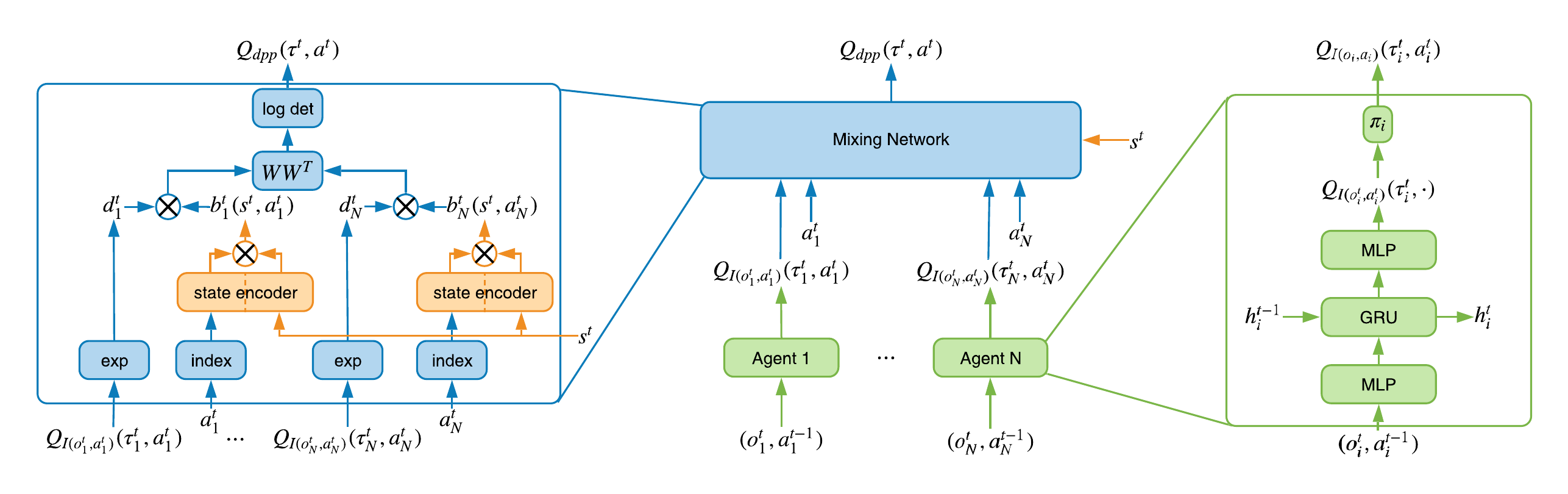}
    \vspace{-20pt}
	\caption{Neural Architecture of Deep Q-DPP. The middle part of the diagram shows the overall architecture of Q-DPP, which consists of each agent's individual Q-networks and a centralized mixing network.
Details of the mixing network are presented in the left.
We compute the quality term, $d_i$, by applying the exponential operator on the individual Q-value, 
and compute the diversity feature term, $\bm{b}_i$, by 
index the corresponding vector in $\bm{\mathcal{B}}$ 
through the global state $s$ and each  action $a_i$. 
	}
	\label{fig:qdpp architecture}
 \vspace{-0pt}
\end{figure}


A critical advantage of Deep Q-DPP is that it can deal with continuous states/observations.  
When the input state $s$  is continuous, we first index the raw diversity feature $\bm{b}'_i$ based on the embedding of discrete action  $a_i$.
To integrate the information of the continuous state, 
we use two  multi-layer feed-forward neural networks $f_d$ and $f_n$, which encodes the direction and norm of the diversity feature separately.
$f_d$ outputs a feature vector with same shape as $\bm{b}'_i$ indicating the \textbf{direction}, and $f_n$ outputs a real value for computing the \textbf{norm}.
In practice, we find modeling the direction and norm of the diversity features separately by two neural networks helps stabilize training, and 
the diversity feature vector is computed as $\bm{b_i}=f_d(\bm{b}'_i, s)\times \sigma(f_n(\bm{b}'_i, s))$.
Finally, the centralized Q-value can then be computed from $d_i$ and $\bm{b_i}$ following Eq. \ref{eq:qd_decomp}.

\newpage
\subsection{Experiments on StarCraft II Micro-Management}
\begin{figure}[h]
	\centering
	\begin{subfigure}[l]{.49\textwidth}
		\centering
		\includegraphics[width=\textwidth]{./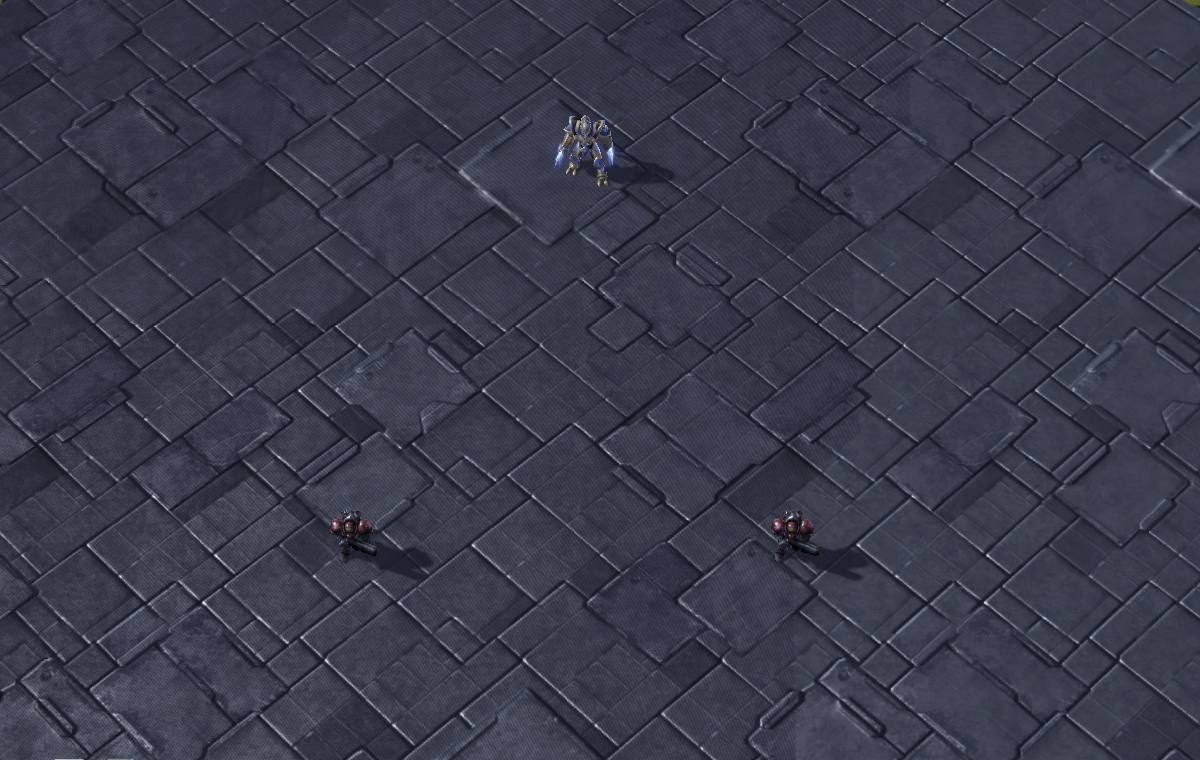}
		\vspace{15pt}
		\caption{Scenario Screenshot}
		\label{fig:3m_game}
	\end{subfigure}
	\begin{subfigure}[l]{.5\textwidth}
		\centering
		\includegraphics[width=\textwidth]{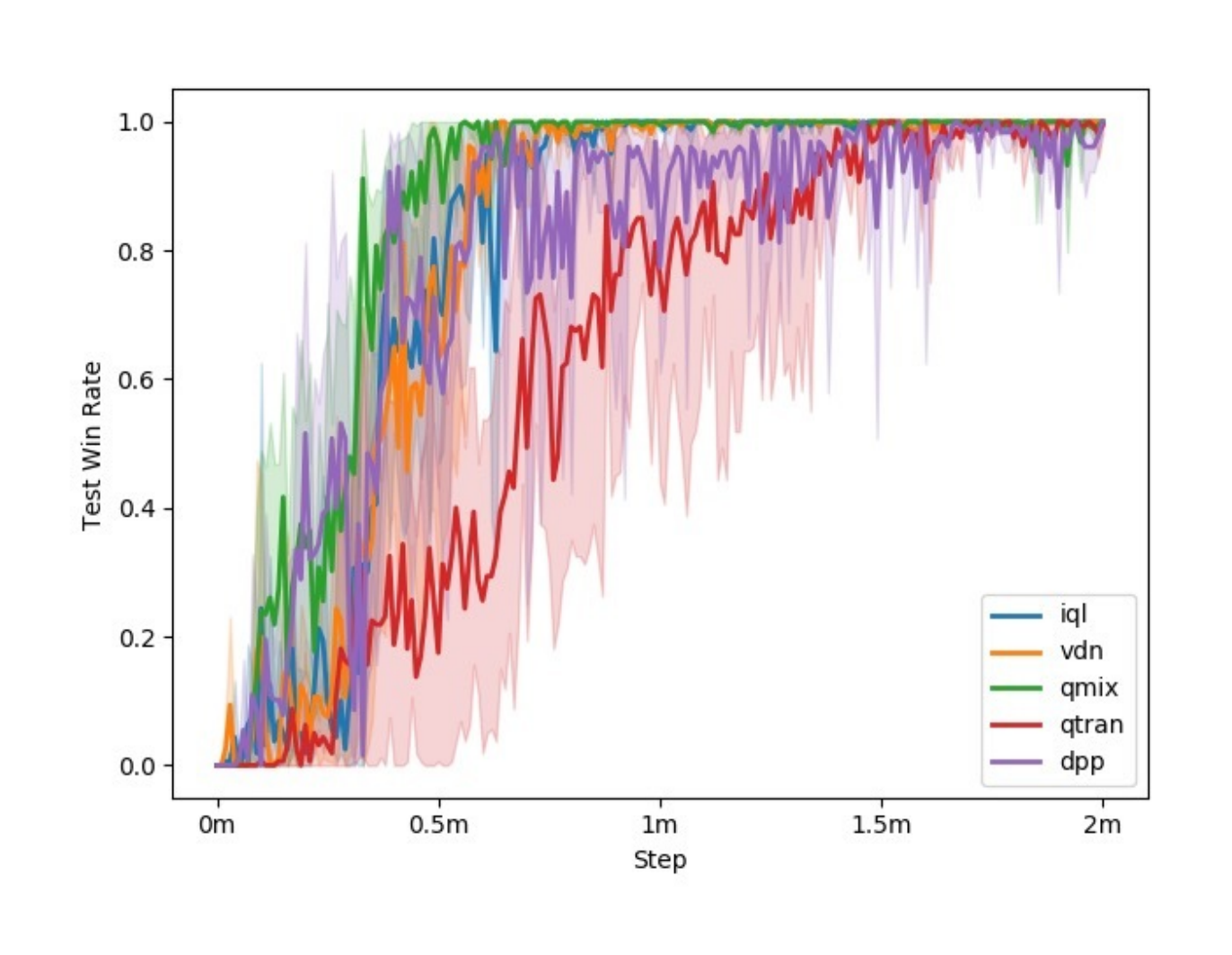}
		\caption{2m\_vs\_1z}
		\label{fig:2m_vs_1z_game}
	\end{subfigure}
	\caption{StarCraft II micro-management on the scenario of  \emph{2 Marines vs. 1 Zealot} and its performance. }
	\label{fig:sc2_games}
\end{figure}

%
%


We study one of the simplest continuous state-action micro-management games in StarCraft II in SMAC \cite{samvelyan2019starcraft}, i.e., \textbf{2m\_vs\_1z}, the screenshots of scenarios are given in Fig.~\ref{fig:sc2_games}(a).
In the 2m\_vs\_1z map, we control a team of 2 Marines to fight with 1 enemy Zergling. 
In this task, it requires the Marine units to take advantage of their larger firing range to defeat over  Zerglings which can only attack local enemies.
The agents can observe a \textbf{continuous} feature vector including the information of health, positions and weapon cooldown of other agents. 
In terms of reward design, we keep the default setting. 
All agents receive a large final reward for winning a battle, at the meantime, they also receive immediate rewards that are proportional to the difference of total damages between the two teams in every time-step. 
We compare Q-DPP with aforementioned baseline models, i.e., COMA, VDN, QMIX, MAVEN, and QTRAN, and plot the results in Fig. \ref{fig:sc2_games}(b).
The results show that Q-DPP can perform as good as the state-of-the-art model, QMIX, even when the state feature is continuous.
However, the performance is not stable and presents high variance. 
We believe full treatments need substantial future work.

\end{document}